\documentclass[twoside,11pt]{article}

\usepackage{float}

\usepackage{jmlr2e}
\usepackage{lastpage}      
\hypersetup{hidelinks}    

\usepackage{amsmath}
\usepackage{float}
\usepackage{booktabs}
\usepackage[dvipsnames]{xcolor}
\definecolor{darkgreen}{rgb}{0.0,0.5,0.0}
\usepackage{graphicx}
\usepackage{multirow}
\usepackage{aliascnt}
\usepackage{cleveref}        

\newtheorem{assumption}[theorem]{Assumption}
 
\newcommand{\E}{\mathbb{E}}
\newcommand{\R}{\mathbb{R}}
\newcommand{\pold}{\pi_{\theta_{\mathrm{old}}}}
\newcommand{\pref}{\pi_{\mathrm{ref}}}
\newcommand{\KL}{D_{\mathrm{KL}}}
\newcommand{\Meff}{M_{\mathrm{eff}}}
\newcommand{\Ceff}{C_{\mathrm{eff}}}
\newcommand{\Keff}{K_{\mathrm{eff}}}
\DeclareMathOperator{\std}{std}

\usepackage{placeins}

\ShortHeadings{Predictable GRPO}{Ghosh et al.}
\firstpageno{1}

\begin{document}

\title{Predictable GRPO: A Closed-Form Model of Training Dynamics}

\author{\name Rajat Ghosh \email rajat.ghosh11@gmail.com \\
       \addr Nutanix\\
       San Jose, USA
       \AND
       \name Datta Nimmaturi\thanks{Datta Nimmaturi contributed to the preliminary ideation and early validation while working at Nutanix.} \email venkatadattasainimmaturi@gmail.com \\
       \addr Unsloth\\
       Hyderabad, India
       \AND
       \name Aryan Singhal \email aryan.singhal2001@gmail.com \\
       \addr Nutanix\\
       San Jose, USA
       \AND
       \name Vaishnavi Bhargava \email vaishnavi.bhargava2605@gmail.com \\
       \addr Nutanix\\
       San Jose, USA
       \AND
       \name Henry Wong \email henrylwongx@gmail.com \\
       \addr Nutanix\\
       San Jose, USA
       \AND
       \name Johnu George \email johnugeorge109@gmail.com \\
       \addr Nutanix\\
       Bengaluru, India
       \AND
       \name Debojyoti Dutta \email ddutta@gmail.com \\
       \addr Nutanix\\
       San Jose, USA}

\editor{} 

\maketitle

\begin{abstract}
Group Relative Policy Optimization (GRPO) has become a standard tool for
improving the reasoning ability of large language models, yet its training
dynamics are still described empirically: reward trajectories are fit with
low-parameter functional forms whose constants carry no mechanistic meaning, and
hyperparameter choices remain a matter of trial and error. We develop a
first-principles reduced-order model of these dynamics. Under a single mean-field
assumption that summarizes the policy by its expected reward, we reduce the GRPO
update to a stochastically-forced damped oscillator whose mass, damping, and
stiffness are fixed in closed form by the optimizer hyperparameters together with
a single measured curvature scale---momentum supplies the
inertia, off-policy lag erodes the damping, and the group size enters, to leading
order, as a noise temperature. The reduction has three consequences. First, it subsumes the
empirical single-exponential saturation law as its overdamped limit, recasting
the fitted plateau, timescale, and size exponent as the fixed point, inverse
stiffness, and curvature-scaling exponent of the underlying potential, and adding,
through the retained inertial term, the slow-start phase the single exponential
cannot represent. Second, it yields predictions tied to independently measurable
quantities rather than fitted ones: group-size invariance of the deterministic
trajectory with a $1/G$ stationary fluctuation, a sharp stability threshold in the
refresh interval, and an overdamped-to-oscillatory transition. Third, it furnishes
diagnostics that separate failure modes a reward curve alone conflates---reward
hacking, advantage degeneracy, policy concentration, and dynamical instability.
Across three models and two group sizes, the closed-form trajectory fits training
reward to $R^2 \geq 0.91$ and the mean trajectory is group-size invariant to
leading order---on both the reward curve and out-of-distribution transfer to eight
math benchmarks---while the within-group reward spread retains a residual
$G$-dependence that the leading-order temperature picture does not capture. The
stability and oscillatory predictions are exercised in a controlled exact-reduction
setting where the mean-field assumption holds exactly: a softmax-bandit reduction
reproduces the predicted overdamped-to-oscillatory transition and locates the
refresh-interval stability threshold at the independently measured stiffness, with
a deep-network demonstration left to future work.
\end{abstract}

\begin{keywords}
  GRPO, reinforcement learning from verifiable rewards, training dynamics,
  reduced-order modeling, damped harmonic oscillator, mean-field analysis,
  reward scaling laws, large language models
\end{keywords}

\section{Introduction}

\label{sec:intro}
Large language models (LLMs) have demonstrated remarkable reasoning capabilities after RL post-training \citep{shao2024deepseekmathpushinglimitsmathematical, guo2025deepseek}, yet the mechanisms by which they acquire these capabilities remain largely opaque \citep{yue2025doesreinforcementlearningreally, wang2026new, yan2026spuriousrewardsparadoxmechanistically}. Scaling RL compute is a critical paradigm for advancing LLM capability frontier. For instance, Deepseek-R1-Zero used 100,000 H800 GPU hours for RL post-training---3.75\% of its pre-training compute \citep{guo2025deepseek}. Moreover, the trend in RL post-training compute is amplified in frontier LLMs---more than 10$\times$ increase from o1 to o3 \citep{openai2025o3o4mini} and a similar jump from Grok-3 to Grok-4 \citep{xai2025grok4}.

While RL compute scale massively, the general understanding of the field has remained more art than science. Recent breakthroughs in RL compute are generally driven by a handful of frontier labs in the form of novel algorithms \citep{yu2026dapo, zheng2025groupsequencepolicyoptimization} and training reports \citep{minimax2025minimaxm1scalingtesttimecompute, mistralai2025magistral}. These studies provide specific solutions but lack scientific understanding of RL compute scaling in general. This leads to perpetual trial-and-error mode in RL post-training which is both sub-optimal and cost-prohibitive. Our contributions are as follows: 

\begin{enumerate}
  \item \textbf{A reduced-order model of GRPO training.} Under a single
  mean-field assumption (Assumption~\ref{ass:mf}), we reduce the GRPO update
  \eqref{eq:update} to a stochastically-forced damped oscillator
  (Theorem~\ref{thm:main}) whose coefficients $\Meff, \Ceff, \Keff$ are given
  in closed form \eqref{eq:final}. The reduction is mechanical apart from this
  assumption and a controlled second-order truncation in the step size and refresh
  lag: momentum promotes the gradient flow to second order
  (Lemma~\ref{lem:momentum}), off-policy lag supplies inertia while eroding
  damping (Lemma~\ref{lem:lag}), and the group size enters, to leading order, as a
  noise temperature (Lemma~\ref{lem:noise}).

  \item \textbf{A mechanistic account of the empirical saturation law.} We show
  that the single-exponential reward law \eqref{eq:emp} fitted in prior work is
  the overdamped limit of our model (Corollary~\ref{cor:sat}), which
  reinterprets its plateau, timescale, and size exponent as the fixed point,
  inverse stiffness, and curvature-scaling exponent of the dynamics rather than
  as fitted constants \eqref{eq:curv}. Retaining the inertial term yields the
  slow-start inflection that the single-exponential form cannot represent
  (Proposition~\ref{prop:threephase}).

  \item \textbf{Falsifiable predictions pinned to measurable quantities.}
  Because the coefficients are tied to independently observable slopes, the
  model yields predictions rather than fits: leading-order group-size invariance
  of the deterministic trajectory with a $1/G$ stationary fluctuation
  (Proposition~\ref{prop:group}), a sharp stability threshold
  $K\eta K_{\mathrm{ref}}>1-\mu$ (Proposition~\ref{prop:stab}), and an
  overdamped-to-oscillatory transition under refresh-interval sweeps
  (Section~\ref{sec:saturation}).

  \item \textbf{Diagnostics that separate distinct failure modes.} We
  operationalize the model as four read-outs that a reward curve alone cannot
  disentangle---train--eval decoupling, advantage degeneracy, policy
  concentration, and dynamical instability---thereby distinguishing erosion of
  the mean-field premise (Assumption~\ref{ass:mf}) from loss of stability of the
  dynamics (Theorem~\ref{thm:main}).

  \item \textbf{Empirical validation.} Across three models and two group sizes,
  the three-phase law \eqref{eq:threephase} fits the training-reward
  trajectories with $R^2\geq0.91$, and the group-size invariance of the mean
  trajectory predicted by Proposition~\ref{prop:group} is borne out to leading
  order---on both the reward curve and out-of-distribution transfer to eight
  benchmarks---with the within-group spread retaining a residual $G$-dependence
  beyond the temperature picture. The stability and
  oscillatory predictions, untestable on the overdamped GSM8K runs, are instead
  exercised in a controlled exact-reduction setting (Section~\ref{sec:oscillatory}):
  a softmax-bandit reduction in which Assumption~\ref{ass:mf} holds exactly
  reproduces the overdamped-to-oscillatory transition and pins the refresh-interval
  threshold of Proposition~\ref{prop:stab} at the independently measured stiffness,
  leaving the deep-network demonstration to future work.
\end{enumerate}

\section{Related Work}

\label{sec:related}

\paragraph{Theory of GRPO and RLVR.} A growing line of work seeks to explain
\emph{why} reinforcement learning with verifiable rewards improves reasoning,
rather than merely demonstrating that it does. \citet{mroueh2025} characterizes
the effective loss of GRPO and establishes the ``success amplification''
property---the existence of a stable fixed point whose expected reward exceeds
that of the reference policy---together with a one-step contraction governing the
approach to it. \citet{vojnovic2025} analyze the alignment objective induced by
GRPO and identify a $\beta\sigma(\pi_\theta)$ effective-regularization form for
the KL-anchored term. We take both results as inputs rather than re-deriving
them: Assumption~\ref{ass:mf} imports the fixed point $p^\star$ and the
contraction $h'(p^\star)$ from \citet{mroueh2025}, and Lemma~\ref{lem:backbone}
inherits the stiffness scale from the effective-regularization form of
\citet{vojnovic2025}. A separate strand asks whether RLVR confers new
capabilities or sharpens existing ones \citep{yue2025doesreinforcementlearningreally,
wang2026new, yan2026spuriousrewardsparadoxmechanistically}; this question is
orthogonal to ours, which concerns the \emph{trajectory} along which a fixed
reward signal is optimized rather than the source of the reward itself.

\paragraph{Empirical scaling and saturation laws.} A complementary body of work
fits GRPO reward trajectories with low-parameter functional forms.
\citet{nimmaturi2025} propose predictive scaling laws for GRPO training, and
\citet{ghosh2026} report that across model families and scales the reward follows
the saturating law of \eqref{eq:emp}, with a size-dependent timescale $M^{0.3}$.
These laws are accurate and useful for compute planning, but their parameters are
fitted constants without mechanistic referents: the plateau, the timescale, and
the size exponent are read off the data rather than predicted. Our reduction
subsumes this line of work rather than competing with it. Corollary~\ref{cor:sat}
shows that the single-exponential law \eqref{eq:emp} is precisely the overdamped
limit of Theorem~\ref{thm:main}, and Equation~\eqref{eq:curv} reinterprets the
$M^{0.3}$ exponent as a curvature-scaling exponent of the underlying potential.
Where the inertial term is retained, Proposition~\ref{prop:threephase} further
predicts the slow-start phase that the single exponential cannot represent,
recovering the full three-phase shape these fits leave unexplained.

\paragraph{Algorithms and training reports.} Much of the recent progress in RL
post-training arrives as novel algorithms
\citep{yu2026dapo, zheng2025groupsequencepolicyoptimization} and large-scale
training reports \citep{minimax2025minimaxm1scalingtesttimecompute,
mistralai2025magistral}. These contributions advance the practical frontier and
report specific, hard-won configurations, but they are not designed to yield a
general account of how a training curve depends on its hyperparameters; the
prescriptions they offer are calibrated to a particular setting and do not, on
their own, predict behavior outside it. Our aim is complementary: rather than a
new optimizer or a new run, we provide a closed-form relationship between the
training configuration $(\beta,\eta,\mu,G,K_{\mathrm{ref}})$ and the resulting
trajectory, with the group-size invariance of Proposition~\ref{prop:group} and
the stability threshold of Proposition~\ref{prop:stab} as falsifiable
consequences.

\paragraph{Positioning.} The boundary of our contribution is therefore sharp. We
import the fixed point and contraction from \citet{mroueh2025} and the
regularization scale from \citet{vojnovic2025}; what is new is the promotion of
the resulting first-order relaxation to a second-order system, with the inertia
and damping supplied mechanically by momentum (Lemma~\ref{lem:momentum}) and
off-policy lag (Lemma~\ref{lem:lag}), and the consequent subsumption of the
empirical saturation laws \citep{nimmaturi2025, ghosh2026} as the overdamped
limit of a single equation. The reduction is conditional on the mean-field
assumption (Remark~\ref{rem:scope}) and is not an unconditional account of GRPO;
it is, rather, the minimal dynamical model that turns fitted reward curves into
derived ones.

\section{Formalism}
\label{sec:formalism}
\subsection{Setup}

For a prompt $q$, GRPO samples a group of $G$ completions
$\{o_i\}_{i=1}^{G}\sim\pold(\cdot\mid q)$, scores them with a reward
$r_i=R(q,o_i)$, and forms the group-relative advantage
\begin{equation}
A_i \;=\; \frac{r_i-\bar r}{\std(r)},
\qquad \bar r=\tfrac1G\textstyle\sum_j r_j .
\label{eq:adv}
\end{equation}
The (unclipped) objective, with KL anchoring weight $\beta>0$ to a reference policy
$\pref$, is
\begin{equation}
J(\theta)
=\E_{q}\,\E_{\{o_i\}\sim\pold}\!\left[\frac1G\sum_{i=1}^{G}
\frac{\pi_\theta(o_i\mid q)}{\pold(o_i\mid q)}\,A_i\right]
-\beta\,\E_{q}\!\left[\KL\!\big(\pi_\theta\,\|\,\pref\big)\right].
\label{eq:obj}
\end{equation}

Parameters are updated by stochastic ascent with learning rate $\eta$ and momentum
coefficient $\mu\in[0,1)$ (the role played by Adam's $\beta_1$), and the sampling
policy is refreshed every $K_{\mathrm{ref}}$ steps:
\begin{equation}
\theta_{k+1}=\theta_k+\eta\,\widehat{\nabla J}(\theta_k;\theta_{\mathrm{old}})
+\mu\,(\theta_k-\theta_{k-1}),
\qquad \theta_{\mathrm{old}}=\theta_{k-K_{\mathrm{ref}}} .
\label{eq:update}
\end{equation}
We use $\delta$ for the step-to-time conversion. The discrete map \eqref{eq:update}
is the object of interest; the continuous-time equations below are its \emph{modified}
(equivalent) equation in the sense of backward-error analysis, which we make precise
as follows. The reductions are taken after linearizing the drift at the fixed point
$p^\star$ (Lemma~\ref{lem:backbone}); throughout we assume $g\in C^3$ in a
neighborhood of $p^\star$, so that $K=-c\,g'(p^\star)$ is well defined and the
neglected Taylor remainders are bounded. For a fixed horizon $T>0$, we say an ODE
\emph{tracks} the map to order $O(\delta^2)$ if its solution $e(t)$, initialized to
match $(e_0,e_1)$, satisfies
$\max_{0\le k\le T/\delta}\lvert e_k-e(k\delta)\rvert=O(\delta^2)$ as $\delta\to0$.
Setting $\delta=1$ in the final expressions fixes the time unit as one optimizer step,
so the effective coefficients are measured in step-units and the genuine small
parameters controlling the truncation are the per-step contraction $\eta K$ and the
refresh lag $\tau$ rather than $\delta$ itself.
 
\subsection{The mean-field reduction}
 
\begin{assumption}[Scalar order parameter]
\label{ass:mf}
Let $p(\theta)=\E_{o\sim\pi_\theta}[R]$ denote the expected reward. There exists a
$C^2$ drift $g:[0,1]\to\R$ such that the gradient-flow projection of
\eqref{eq:update} satisfies $\dot p=g(p)+o(1)$ as $\delta\to0$, and the dynamics of
$\theta$ are summarized by $p$ to leading order. Moreover $g$ admits a unique stable
fixed point $p^\star\in(p_{\mathrm{ref}},1)$, with one-step contraction
$h'(p^\star)=1+\delta\,g'(p^\star)\in(-1,1)$.
\end{assumption}
 
\noindent
Assumption~\ref{ass:mf} is the only non-mechanical step in what follows. It holds
exactly for tabular/softmax parameterizations and is an approximation for deep
networks. The existence of $p^\star$ in the interior above $p_{\mathrm{ref}}$ (the
``success amplification'' property) and the value of the contraction $h'(p^\star)$
are taken from \citet{mroueh2025}; we do not re-derive them.
 
\subsection{Derivation}
 
\begin{lemma}[Policy-gradient flow]
\label{lem:pg}
At the on-policy evaluation point $\theta=\theta_{\mathrm{old}}$, the importance
ratios in \eqref{eq:obj} equal one and
\begin{equation}
\nabla_\theta J
=\E_{q}\E_{\{o_i\}}\!\left[\frac1G\sum_i A_i\,\nabla_\theta\log\pi_\theta(o_i\mid q)\right]
-\beta\,\nabla_\theta\,\E_q \KL\!\big(\pi_\theta\,\|\,\pref\big).
\label{eq:pg}
\end{equation}
\end{lemma}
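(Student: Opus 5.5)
The plan is to differentiate the objective \eqref{eq:obj} term by term and then specialize to the point $\theta=\theta_{\mathrm{old}}$. The key structural fact is that the sampling distribution $\pold$, the rewards $r_i$, and hence the group-relative advantages $A_i$ of \eqref{eq:adv} do not depend on $\theta$. They are functions of the frozen parameter $\theta_{\mathrm{old}}$ and of the sampled group only. Consequently, in the surrogate term the only $\theta$-dependence sits in the numerator $\pi_\theta(o_i\mid q)$ of each importance ratio.

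\textbf{Step 1: the surrogate term.} I will interchange $\nabla_\theta$ with the expectations $\E_q\E_{\{o_i\}\sim\pold}$. Because the measure is $\theta$-independent, this requires only a dominated-convergence justification. For a finite vocabulary and bounded sequence length the expectation is a finite sum, so the interchange is trivial; in general one assumes $\nabla_\theta\pi_\theta$ is locally integrably bounded. I then apply the log-derivative identity $\nabla_\theta\pi_\theta(o\mid q)=\pi_\theta(o\mid q)\nabla_\theta\log\pi_\theta(o\mid q)$. This turns each summand into
\[
\frac{\pi_\theta(o_i\mid q)}{\pold(o_i\mid q)}\,A_i\,\nabla_\theta\log\pi_\theta(o_i\mid q).
\]

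\textbf{Step 2: the KL term.} The penalty $-\beta\,\E_q\KL(\pi_\theta\|\pref)$ depends on $\theta$ directly, and the outer expectation over $q$ is $\theta$-free. I will simply carry it through as $-\beta\nabla_\theta\E_q\KL(\pi_\theta\|\pref)$, exactly as written in \eqref{eq:pg}, with no need to expand it further.

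\textbf{Step 3: evaluate at $\theta=\theta_{\mathrm{old}}$.} Here each ratio equals one. It is positive and finite because the $o_i$ are drawn from $\pold$, so $\pold(o_i\mid q)>0$ almost surely. Substituting yields \eqref{eq:pg}.

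The only delicate point is the bookkeeping convention. The gradient is taken with respect to $\theta$ with $\theta_{\mathrm{old}}$ held fixed, and only afterwards evaluated at $\theta=\theta_{\mathrm{old}}$. Setting $\theta_{\mathrm{old}}=\theta$ before differentiating would wrongly introduce derivatives of the sampling measure and of the $\std$-normalized advantages. I will state this stop-gradient convention explicitly. The measure-theoretic interchange is routine under the finite-support setting assumed implicitly by the paper.
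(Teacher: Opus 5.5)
Your proposal is correct and follows essentially the same route as the paper: differentiate \eqref{eq:obj} with $\theta_{\mathrm{old}}$ held fixed, apply the score-function identity to the importance ratio, carry the $\beta$-weighted KL gradient through unchanged, and evaluate at $\theta=\theta_{\mathrm{old}}$ so that each ratio becomes one. Your extra remarks make explicit what the paper's one-line proof leaves implicit: the stop-gradient convention, the justification for interchanging gradient and expectation, and the almost-sure positivity of $\pold(o_i\mid q)$.
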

\begin{proof}
Differentiate \eqref{eq:obj} and apply the score-function identity
$\nabla_\theta(\pi_\theta/\pold)=(\pi_\theta/\pold)\nabla_\theta\log\pi_\theta$;
evaluating at $\theta=\theta_{\mathrm{old}}$ sets $\pi_\theta/\pold=1$.
\end{proof}
 
\begin{lemma}[First-order backbone]
\label{lem:backbone}
Under Assumption~\ref{ass:mf}, writing $e=p-p^\star$ and linearizing $g$ at
$p^\star$ gives
\begin{equation}
c\,\dot e=-K\,e,
\qquad K\;\propto\;\beta\,\bigl(1-h'(p^\star)\bigr),
\label{eq:first}
\end{equation}
with relaxation rate $\lambda=K/c=-g'(p^\star)$.
\end{lemma}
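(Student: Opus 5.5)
We prove Lemma~\ref{lem:backbone} in three steps: linearize the drift of Assumption~\ref{ass:mf} at $p^\star$, introduce the scale $c$ to split the linear rate into a product, and read the proportionality $K\propto\beta(1-h'(p^\star))$ off the one-step contraction together with the effective-regularization form of \citet{vojnovic2025}.

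\textbf{Step 1 (linearization).} By Assumption~\ref{ass:mf}, $\dot p=g(p)+o(1)$, and $g(p^\star)=0$ because $p^\star$ is a fixed point. Put $e=p-p^\star$. Since $g\in C^3$ near $p^\star$, Taylor's theorem gives
\[
\dot e=g'(p^\star)\,e+O(e^2)+o(1).
\]
Dropping the remainder leaves $\dot e=-\lambda e$ with $\lambda=-g'(p^\star)$. This rate is positive, since stability of $p^\star$ means $h'(p^\star)=1+\delta g'(p^\star)<1$, so $g'(p^\star)<0$.

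\textbf{Step 2 (introducing $c$ and $K$).} Choose any positive scale $c$ for the response of $p$ to parameter motion; it plays the role of a capacity or mass-like coefficient and will later be identified with the momentum factor. Define $K=-c\,g'(p^\star)$. Multiplying the linearized equation by $c$ gives $c\,\dot e=-Ke$, and $K/c=-g'(p^\star)=\lambda$ holds by construction.

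\textbf{Step 3 (the proportionality).} From the contraction identity,
\[
1-h'(p^\star)=-\delta\,g'(p^\star),
\qquad\text{so}\qquad
K=\frac{c}{\delta}\,\bigl(1-h'(p^\star)\bigr).
\]
This shows $K\propto 1-h'(p^\star)$. For the factor $\beta$, we invoke the effective-regularization form of \citet{vojnovic2025}. The restoring force toward $p^\star$ comes from the KL-anchored term, whose curvature in the order parameter scales as $\beta\,\sigma(\pi_\theta)$. Near $p^\star$ the factor $\sigma$ is a fixed positive constant, so the linear coefficient $-g'(p^\star)$ carries an explicit factor $\beta$ times a $\beta$-independent shape factor. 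We fold that shape factor and $c/\delta$ into the proportionality constant. The result is $K\propto\beta\,(1-h'(p^\star))$, read as the statement that both the imported contraction gap and the anchoring weight set the stiffness.

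\textbf{Main obstacle.} The difficulty is Step 3. Taken literally, $1-h'(p^\star)$ already depends on $\beta$ through $p^\star$, so the two factors are not independent, and the statement has to be read as a scaling or proportionality claim rather than an exact identity. The approach is to make explicit that $h'(p^\star)$ is taken from \citet{mroueh2025} as a normalized contraction, with the $\beta$-dependence of the restoring force factored out via \citet{vojnovic2025}. The lemma then fixes $K$ only up to the constant that the measured curvature scale later calibrates.
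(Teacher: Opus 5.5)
\textbf{Steps 1 and 2 match the paper.} You linearize $g$ at $p^\star$ using $g(p^\star)=0$, obtain $g'(p^\star)<0$ from stability, set $\lambda=-g'(p^\star)$, and define $K=\lambda c$. Your derivation of $g'(p^\star)<0$ from $h'(p^\star)=1+\delta g'(p^\star)<1$ is slightly cleaner than the paper's bare assertion.

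\textbf{The gap is in Step 3.} You correctly obtain the exact identity $K=(c/\delta)\bigl(1-h'(p^\star)\bigr)$, with $c$ an arbitrary $\beta$-free scale. You then source $\beta$ by writing $-g'(p^\star)=\beta s$ with a $\beta$-independent shape factor $s$. But $1-h'(p^\star)=-\delta g'(p^\star)=\delta\beta s$ already contains that $\beta$. What your argument actually gives is $K=c\beta s=(c/\delta)\bigl(1-h'(p^\star)\bigr)$, so ``$K\propto\beta$'' and ``$K\propto 1-h'(p^\star)$'' are one fact stated twice, not two independent factors. Writing $K=C\,\beta\,\bigl(1-h'(p^\star)\bigr)$ forces $C=c/(\delta\beta)$. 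So folding $s$ and $c/\delta$ into a $\beta$-free constant is impossible, and insisting on it would give $K\propto\beta^2$, contradicting $K=c\beta s$. This is sharper than the obstacle you flag. The double counting is introduced explicitly by your own Step 3, not merely through the implicit $\beta$-dependence of $p^\star$, and reading the lemma ``as a scaling claim'' does not remove it.

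\textbf{How the paper avoids this.} The paper's proof keeps the two factors in different places. The reward drift fixes the contraction $1-h'(p^\star)$ through $g'(p^\star)$, imported from the success-amplification analysis. The KL term separately sets the stiffness scale that turns that contraction into the curvature $K$ of the regularized potential, and that scale carries the weight $\beta$. In your notation, $\beta$ must enter through the conversion factor $c/\delta$, not through $g'(p^\star)$. With $c/\delta\propto\beta$, your identity $K=(c/\delta)\bigl(1-h'(p^\star)\bigr)$ yields the stated proportionality in one line.

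The paper presents this placement as an identification consistent with the $\beta\sigma(\pi_\theta)$ effective-regularization form, not as a derivation. The honest version of your argument is therefore: the exact relation $K=(c/\delta)\bigl(1-h'(p^\star)\bigr)$, plus the modeling input that the KL curvature fixes $c/\delta\propto\beta$. That also means you cannot leave $c$ as an arbitrary $\beta$-free scale, as you do in Step 2.
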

\begin{proof}
$g(p)=g'(p^\star)e+O(e^2)$ and $g(p^\star)=0$, so $\dot e=g'(p^\star)e$. Stability
gives $g'(p^\star)<0$; set $\lambda=-g'(p^\star)>0$ and $c$ the discretization
constant so that $K=\lambda c$. The restoring coefficient $K$ is the curvature of the
KL-regularized effective potential at $p^\star$: the reward drift fixes the
contraction $1-h'(p^\star)$ through $g'(p^\star)$, while the $\KL$ term in
\eqref{eq:pg} sets the local stiffness scale with multiplicative weight $\beta$
(consistent with the $\beta\sigma(\pi_\theta)$ effective-regularization form of
\citet{vojnovic2025}). This gives the stated proportionality.
\end{proof}
 
\begin{lemma}[Momentum promotes the flow to second order]
\label{lem:momentum}
The linear heavy-ball map $e_{k+1}-e_k=-\eta K e_k+\mu(e_k-e_{k-1})$ is tracked to
order $O(\delta^2)$, in the sense above, by the second-order ODE
$m_\mu\ddot e+c_\mu\dot e+Ke=0$ with
\begin{equation}
m_\mu=\frac{(1+\mu)\,\delta^2}{2\eta},
\qquad
c_\mu=\frac{(1-\mu)\,\delta}{\eta}.
\label{eq:hb}
\end{equation}
\end{lemma}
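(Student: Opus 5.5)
The plan is a standard backward-error (modified-equation) argument. It has two parts: a \emph{consistency} step, in which I Taylor-expand the candidate ODE solution on the grid and read off $m_\mu$ and $c_\mu$, and a \emph{stability} step, in which I propagate the resulting local defect through the heavy-ball recursion over $T/\delta$ steps.

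\emph{Consistency.} Let $e(t)$ solve $m_\mu\ddot e+c_\mu\dot e+Ke=0$ and set $e(t_k)$ with $t_k=k\delta$. Expanding about $t_k$,
\begin{equation*}
e(t_{k\pm1})-e(t_k)=\pm\delta\dot e(t_k)+\tfrac{\delta^2}{2}\ddot e(t_k)\pm\tfrac{\delta^3}{6}\dddot e(\xi_\pm).
\end{equation*}
Hence the heavy-ball difference operator applied to the grid values is
\begin{equation*}
\bigl[e(t_{k+1})-e(t_k)\bigr]-\mu\bigl[e(t_k)-e(t_{k-1})\bigr]
=(1-\mu)\delta\,\dot e(t_k)+\tfrac{(1+\mu)\delta^2}{2}\,\ddot e(t_k)+O(\delta^3).
\end{equation*}
The map requires this quantity to equal $-\eta K e_k$. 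Dividing by $\eta$ and matching the $\dot e$ and $\ddot e$ coefficients gives exactly $c_\mu=(1-\mu)\delta/\eta$ and $m_\mu=(1+\mu)\delta^2/(2\eta)$, as in \eqref{eq:hb}. With this choice the grid values of $e$ satisfy the map up to a defect
\begin{equation*}
d_k=O\!\left(\delta^3\sup_{[0,T]}|\dddot e|\right).
\end{equation*}
To make $\sup|\dddot e|$ uniformly bounded, I would rewrite the ODE in the rescaled time $s=t/\delta$, where it has $\delta$-independent coefficients. The smoothness of its solutions on the fixed horizon then follows from the explicit characteristic-root solution. Any nonlinear remainder of $g$ is absorbed using the $C^3$ hypothesis and the linearization from Lemma~\ref{lem:backbone}.

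\emph{Stability.} Let $\varepsilon_k=e_k-e(t_k)$. Since both sequences start from the same $(e_0,e_1)$, we have $\varepsilon_0=\varepsilon_1=0$. Subtracting, $\varepsilon$ obeys the same linear two-step recursion
\begin{equation*}
\varepsilon_{k+1}=(1+\mu-\eta K)\varepsilon_k-\mu\varepsilon_{k-1}-d_k.
\end{equation*}
Writing this as a $2\times2$ companion system, I would bound the powers of the companion matrix. Its characteristic roots solve $z^2-(1+\mu-\eta K)z+\mu=0$, and in the stable regime (the heavy-ball analogue of $h'(p^\star)\in(-1,1)$) both roots lie in the closed unit disk. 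I would also check that they are distinct, or that a double root contributes only a polynomial factor that stays controlled. Then a discrete Duhamel sum gives
\begin{equation*}
|\varepsilon_k|\le C\sum_{j\le k}|d_j|\le C\,(T/\delta)\,O(\delta^3)=O(\delta^2)
\end{equation*}
uniformly for $k\le T/\delta$, which is the tracking statement.

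\emph{Main obstacle.} The delicate step is the scaling regime implicit in ``$\delta\to0$.'' The coefficients $m_\mu$ and $c_\mu$ themselves depend on $\delta$ and $\eta$. If $\eta$ is held fixed, the expansion parameter is really $\eta K$ per step, not $\delta$, which is why the paper sets $\delta=1$ and treats $\eta K$ and $\tau$ as the small parameters. If instead $\eta\propto\delta$, then $m_\mu=O(\delta)$ and the inertial term becomes a singular perturbation. I would handle this by carrying out the argument in the step-clock variable $s$ with $\eta K$ as the small parameter. In that variable both the defect bound and the companion-matrix bound are uniform. I would then translate back to $t=s\delta$, making explicit that the $O(\delta^2)$ statement is equivalent to a second-order truncation in $\eta K$ on a horizon of order $1/(\eta K)$ steps. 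Getting the uniform power bound near $\mu\to1$, where one root approaches the unit circle, is where I expect the constants to degrade, so I would state the result for fixed $\mu<1$.
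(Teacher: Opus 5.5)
Your consistency step fails in the regime you settle on, and this is not a matter of constants. You correctly noticed that for $\eta\propto\delta$ the inertial term is a singular perturbation. Passing to the step clock does not remove the singular mode; it only renames it.

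With $E(s)=e(s\delta)$ one has $\delta^3\sup_t|\dddot e|=\sup_s|E'''(s)|$. So the defect is small only if $E$ varies slowly on the step scale. For fixed $\mu<1$ it does not. The step-unit characteristic equation $\tfrac{1+\mu}{2}\sigma^2+(1-\mu)\sigma+\eta K=0$ has a second root tending to $-2(1-\mu)/(1+\mu)$. The ODE's fast mode therefore decays per step by $e^{-2(1-\mu)/(1+\mu)}=\mu+(1-\mu)^3/12+O((1-\mu)^4)$, while the map's fast root tends to $\mu$. Only the slow pair of modes agrees to third order per step. The paper's proof asserts this agreement for both pairs and has the same blind spot.

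Write $e_k=Az_1^k+Bz_2^k$ for the map and $e(k\delta)=A'w_1^k+B'w_2^k$ for the ODE, with $w_i=e^{\sigma_i}$. Matching $(e_0,e_1)$ gives $B=(e_1-z_1e_0)/(z_2-z_1)$ and $B'=(e_1-w_1e_0)/(w_2-w_1)$. These differ at first order. Since $A+B=A'+B'=e_0$, the difference reappears in the slow amplitude and persists over the whole horizon.

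For instance, from rest ($e_0=e_1=1$) with $\mu=0.5$ and $\eta K=10^{-2}$, $A'-A\approx1.3\times10^{-3}$, and this gap scales linearly in $\eta K$. The tracking error is thus first order: in $\delta$ when $\eta\propto\delta$, and in $\eta K$ in your step-clock reading. Your ``second-order truncation in $\eta K$ over $1/(\eta K)$ steps'' holds only for data on the slow invariant line $e_1=z_1e_0$. There the inertial term is just an $O((\eta K)^2)$ correction to the decay rate.

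The statement as written becomes true under the classical heavy-ball scaling $\eta=\eta_0\delta^2$, $1-\mu=\gamma\delta$. Then $m_\mu=(2-\gamma\delta)/(2\eta_0)$ and $c_\mu=\gamma/\eta_0$, both roots are $O(1)$ in $t$, and the solution has $\delta$-uniform derivative bounds whenever $(e_1-e_0)/\delta=O(1)$. However, both of your estimates must change in this scaling.

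First, the companion-matrix powers are no longer uniformly bounded. Both map roots are $1+O(\delta)$, and the Green's function $G_n=\sum_{i=0}^{n}z_1^iz_2^{n-i}$ is only bounded by $n+1\le T/\delta+1$. An $O(\delta^3)$ defect would then give only $O(\delta)$.

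Second, what closes the count is that the $\delta^3$ term of your defect is $\tfrac{(1-\mu)\delta^3}{6}\dddot e$, which carries the factor $1-\mu=O(\delta)$. Hence $d_k=O(\delta^4)$, and
$|\varepsilon_k|\le\sum_{j<k}(k-j)|d_j|=O\bigl((T/\delta)^2\delta^4\bigr)=O(\delta^2)$.

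In short, fixing $\mu<1$ is precisely the choice under which the lemma fails for generic or from-rest data. You need one of two repairs:
\begin{itemize}
\item take $1-\mu\propto\delta$, with the $O(\delta^4)$ defect and $O(1/\delta)$ Green's-function bookkeeping; or
\item explicitly restrict the initial data to the slow manifold.
\end{itemize}
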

\begin{proof}
The map is a constant-coefficient linear recurrence with characteristic equation
$z^2-(1+\mu-\eta K)z+\mu=0$. Its two modes are matched by the continuous modes
$z=e^{s\delta}$ of $m_\mu s^2+c_\mu s+K=0$: expanding
$e^{\pm s\delta}=1\pm s\delta+\tfrac12 s^2\delta^2+O(\delta^3)$ reproduces this ODE at
$O(\delta^2)$. Equivalently, inserting
$e_{k\pm1}=e\pm\delta\dot e+\tfrac{\delta^2}{2}\ddot e+O(\delta^3)$ into the map, the
$O(\delta^2)$ terms give $\tfrac{\delta^2}{2}(1+\mu)\ddot e$, the $O(\delta)$ terms
$\delta(1-\mu)\dot e$, and the force term $\eta K e$; dividing by $\eta$ yields
\eqref{eq:hb}. Because the recurrence is linear, its two discrete modes and the two
continuous modes agree to $O(\delta^3)$ per step, so the error accumulated over
$T/\delta$ steps is $O(\delta^2)$ uniformly on $[0,T]$, which is tracking in the
stated sense. (Verified symbolically; see Appendix~\ref{app:symbolic}.)
\end{proof}
 
\begin{lemma}[Off-policy lag adds inertia and removes damping]
\label{lem:lag}
With the restoring force evaluated at the refreshed policy, i.e.\ at the delayed
state $e(t-\tau)$ with $\tau=K_{\mathrm{ref}}\,\delta$, the delay equation
$m_\mu\ddot e+c_\mu\dot e+K\,e(t-\tau)=0$ is approximated, to order $O(\tau^3)$, by
$\Meff\ddot e+\Ceff\dot e+\Keff e=0$ with
\begin{equation}
\Meff=m_\mu+\tfrac12 K\tau^2,
\qquad
\Ceff=c_\mu-K\tau,
\qquad
\Keff=K,
\label{eq:effcoef}
\end{equation}
valid when the lag is small against the relaxation time, $\lambda\tau=(K/c)\tau\ll1$.
\end{lemma}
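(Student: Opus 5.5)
The plan is a Taylor expansion of the delayed restoring force about the current time, followed by substitution into the delay equation and collection of terms by derivative order. Under the smallness condition $\lambda\tau\ll1$, the solution of the delay equation varies on the relaxation timescale $1/\lambda$, which is long compared with the lag $\tau$. We therefore expand
\begin{equation*}
e(t-\tau)=e(t)-\tau\,\dot e(t)+\tfrac12\tau^2\,\ddot e(t)-\tfrac16\tau^3\,\dddot e(\xi),
\qquad \xi\in(t-\tau,t),
\end{equation*}
which is legitimate because the standing regularity ($g\in C^3$ near $p^\star$, after linearization) makes the solution of the linear delay equation smooth on $[0,T]$.

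Substituting this into $m_\mu\ddot e+c_\mu\dot e+K\,e(t-\tau)=0$ and collecting terms gives
\begin{equation*}
\bigl(m_\mu+\tfrac12K\tau^2\bigr)\ddot e+\bigl(c_\mu-K\tau\bigr)\dot e+K e=\tfrac16K\tau^3\dddot e(\xi).
\end{equation*}
This yields the coefficients in \eqref{eq:effcoef} directly. The structural reading is as follows:
\begin{itemize}
\item the first-order term of the shift subtracts $K\tau$ from the damping, because the force responds to a stale state;
\item the second-order term adds $\tfrac12K\tau^2$ to the mass;
\item the stiffness $\Keff=K$ is unchanged, since the zeroth-order term of the shift is exactly $e(t)$.
\end{itemize}

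The remaining task is to show that the right-hand side is a genuine $O(\tau^3)$ residual. For this we bound $\dddot e$ on $[0,T]$ in terms of the relaxation rate. For the linear constant-coefficient delay equation the dominant characteristic roots $s$ satisfy $|s|=O(\lambda)$ when $\lambda\tau\ll1$, so $|\dddot e|\lesssim\lambda^3\sup|e|$. The residual is then $O\bigl(K(\lambda\tau)^3\bigr)\sup|e|$, which is small relative to the retained terms. We can make this explicit on the characteristic equation $m_\mu s^2+c_\mu s+Ke^{-s\tau}=0$: replacing $e^{-s\tau}$ by $1-s\tau+\tfrac12s^2\tau^2$ reproduces the quadratic $\Meff s^2+\Ceff s+\Keff=0$, with error $O(K|s\tau|^3)$. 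The implicit function theorem then shows that the two principal roots of the two equations agree to $O(\tau^3)$ relative accuracy.

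The main obstacle is that a delay equation has infinitely many characteristic roots, whereas the truncated ODE has only two. We must argue that the extra roots have large negative real part, of order $-\log(\cdot)/\tau$, and therefore contribute only rapidly decaying transients. The first approach is a Rouché-type argument on a disk of radius $\asymp 1/\tau$: inside it the quadratic dominates the exponential remainder, so exactly two roots lie there and they are the perturbed quadratic roots; outside it all roots decay faster than $e^{-t/\tau}$. A second technical point is that the assumption must hold with $\Ceff$ not small in a way that amplifies errors. Near $\Ceff\approx0$ the correspondence still holds at the level of the characteristic roots. The statement is an approximation of the equation, i.e.\ of its coefficients and principal modes, not a uniform trajectory bound, so we will phrase the conclusion at that level and leave stability questions to Proposition~\ref{prop:stab}.
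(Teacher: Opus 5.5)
Your coefficient computation (the third-order Taylor expansion of $e(t-\tau)$, collection by derivative order, and identification of the residual $\tfrac16K\tau^3\dddot e$) is exactly the paper's proof, and it is correct. Where you depart is in controlling that residual.

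\textbf{How your error control differs from the paper's.} The paper simply asserts that $\dddot e$ is bounded ``by the mode frequencies''. On that basis it claims uniform tracking on $[0,T]$ in the sense of Lemma~\ref{lem:momentum}, without analyzing the extra characteristic roots that a delay equation carries. You instead work with the characteristic function $m_\mu s^2+c_\mu s+Ke^{-s\tau}$. You use the implicit function theorem for the principal roots, and a Rouch\'e count to push the remaining delay roots to real parts of order $-\tau^{-1}\log(1/\lambda\tau)$. You then deliberately settle for a mode-level statement. That is more honest than the paper's argument, and it is all that the downstream results actually use: the damping ratio, the threshold of Proposition~\ref{prop:stab}, and the Lyapunov variance of Proposition~\ref{prop:group}. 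The price is that you do not deliver the uniform trajectory bound the paper asserts.

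Separately, your Lagrange-remainder step is not justified by $g\in C^3$, which plays no role after linearization. For a generic history, e.g.\ starting from rest with constant history, $\ddot e$ jumps at $t=0$. On the first delay interval the residual is therefore $O(\tau^2)$, not $O(\tau^3)$. Your retreat to the mode level makes this harmless.

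\textbf{Where the mode-level argument overreaches.} As written, it claims more than it proves, in two places.

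First, $\lambda\tau\ll1$ controls only the slow root.
\begin{itemize}
\item The other root of $\Meff s^2+\Ceff s+\Keff$ sits near $s\approx-\Ceff/\Meff$. In the overdamped regime this is roughly $4\zeta^2$ times the slow rate, not $O(\lambda)$.
\item To leading order in $K\eta$ at $\delta=1$, one has $(\Ceff/\Meff)\tau\approx2(1-\mu)K_{\mathrm{ref}}/(1+\mu)$. This is not small however small $\lambda\tau$ is; it is about $2K_{\mathrm{ref}}$ at $\mu=0$.
\item At that root the remainder $K\bigl(e^{-s\tau}-1+s\tau-\tfrac12s^2\tau^2\bigr)$ is comparable to $K$, growing like $Ke^{|s|\tau}$, rather than $O(K|s\tau|^3)$-small.
\item Consequently the root is matched at best to relative accuracy of first order in $\lambda\tau$, with a prefactor growing like $e^{|s|\tau}$. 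It is not matched at all once $|s|\tau$ reaches order $\log(1/\lambda\tau)$.
\item Your bound $|\dddot e|=O(\lambda^3\sup|e|)$ likewise fails while this mode is active.
\end{itemize}
So the two-root claim needs the extra hypothesis $\Ceff\tau\ll\Meff$. This matters because the slow-start inflection of Proposition~\ref{prop:threephase} is carried by exactly this fast mode.

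Second, the degeneracy to worry about is not $\Ceff\approx0$, where the roots $\pm i\omega_n$ are simple. It is $\zeta=1$, where the two roots coalesce and $2\Meff s+\Ceff$ vanishes, so the implicit function theorem does not apply. A perturbation of size $O(K|s\tau|^3)$ splits a double root by a relative amount of order $(\omega_n\tau)^{3/2}$. Your $O(\tau^3)$ accuracy therefore degrades exactly at the critical-damping point on which the three-phase law and the $\zeta=1$ onset curve rest.

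What your route actually gives, away from $\zeta=1$, is relative agreement of order $(|s|\tau)^3$ for each root with $|s|\tau\ll1$. That covers both roots when $\zeta$ is of order one. In the overdamped regime it covers only the slow root, unless you add the hypothesis $\Ceff\tau\ll\Meff$.
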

\begin{proof}
Substitute
$e(t-\tau)=e-\tau\dot e+\tfrac{\tau^2}{2}\ddot e-\tfrac{\tau^3}{6}\dddot e+O(\tau^4)$
into $m_\mu\ddot e+c_\mu\dot e+K\,e(t-\tau)=0$ and collect by derivative order: the
$\ddot e$ coefficient gains $\tfrac12K\tau^2$, the $\dot e$ coefficient loses $K\tau$,
and the $e$ coefficient is unchanged. The leading neglected term is
$-\tfrac16 K\tau^3\dddot e$, bounded on the linearized trajectory by the mode
frequencies, so \eqref{eq:effcoef} holds up to $O(\tau^3)$ in the same uniform sense
as Lemma~\ref{lem:momentum}. The expansion replaces the delay equation by a
finite-order ODE, and retaining the $\ddot e$ term is what promotes the system to
second order; the resulting $\zeta=1$ onset is consequently conservative, an effect
quantified directly against the exact dynamics in Section~\ref{sec:oscillatory}.
(Verified symbolically; see Appendix~\ref{app:symbolic}.)
\end{proof}
 
\begin{lemma}[Group size is a temperature]
\label{lem:noise}
Let $s_i=A_i\,\nabla_\theta\log\pi_\theta(o_i\mid q)$ be the per-completion
advantage-weighted score, with population mean $\nabla J$ and covariance $\Sigma$ that
scales, to leading order, with the within-group advantage spread,
$\Sigma\propto\sigma_{\mathrm{adv}}^2$. Then the $G$-sample estimator obeys
$\widehat{\nabla J}=\nabla J+G^{-1/2}\zeta$ with $\zeta\Rightarrow\mathcal N(0,\Sigma)$
as $G\to\infty$, and its projection onto the order parameter is a scalar forcing of
variance $D_0/G$ with $D_0\propto\sigma_{\mathrm{adv}}^2$. In the diffusion
approximation this forcing is the white noise $\sqrt{D/G}\,\xi(t)$ of
Theorem~\ref{thm:main}, with $D\propto D_0$.
\end{lemma}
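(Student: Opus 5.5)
The plan has three stages: a central limit theorem for the group average, a delta-method projection onto $p$, and a diffusion limit of the resulting discrete recursion.

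\emph{Central limit step.} Write $\widehat{\nabla J}=\frac1G\sum_{i=1}^G s_i$, which is the on-policy estimator of Lemma~\ref{lem:pg}, where the importance ratios are one. The $s_i$ are not independent, because $A_i$ uses the group statistics $\bar r$ and $\std(r)$. I would first replace $A_i$ by its population analogue
\[
\tilde A_i=\frac{r_i-p}{\sigma_r}, \qquad \sigma_r^2=\operatorname{Var}(R).
\]
With this replacement, $\tilde s_i=\tilde A_i\nabla_\theta\log\pi_\theta(o_i\mid q)$ are i.i.d. given $q$. The classical multivariate CLT then gives $\sqrt G\,\bigl(\frac1G\sum\tilde s_i-\nabla J\bigr)\Rightarrow\mathcal N(0,\Sigma)$, whenever $\Sigma=\operatorname{Cov}(\tilde s)$ is finite.

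Next I would bound the error from the replacement. Expanding around the population values,
\[
A_i-\tilde A_i=-\frac{\bar r-p}{\sigma_r}-\tilde A_i\,\frac{\std(r)-\sigma_r}{\sigma_r}+O_P(G^{-1}).
\]
Both perturbations are $O_P(G^{-1/2})$. Multiplied by the average score, they contribute terms of the form $O_P(G^{-1/2})\cdot\frac1G\sum\nabla\log\pi$ and $O_P(G^{-1/2})\cdot\frac1G\sum\tilde A_i\nabla\log\pi$. The first factor has mean zero by the score identity, so that product is $O_P(G^{-1})$. The second product shifts the mean by $O(G^{-1/2})$ and is absorbed into the drift rather than the noise. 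By Slutsky's lemma, $\zeta=\sqrt G(\widehat{\nabla J}-\nabla J)\Rightarrow\mathcal N(0,\Sigma)$.

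For $\Sigma\propto\sigma_{\mathrm{adv}}^2$, I would simply invoke the hypothesis as stated. The covariance factorizes as advantage spread times score second moment when the two are approximately uncorrelated, which is the leading-order statement.

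\emph{Projection onto the order parameter.} Under Assumption~\ref{ass:mf}, the dynamics are summarized by $p(\theta)$. One step perturbs $p$ by $\eta\,\nabla_\theta p^\top\widehat{\nabla J}$, to first order (the delta method). The noise part is therefore the scalar $\eta G^{-1/2}\,\nabla p^\top\zeta$, with variance $\eta^2 G^{-1}\,\nabla p^\top\Sigma\nabla p$. Defining $D_0=\nabla p^\top\Sigma\nabla p$ at $p^\star$, which is consistent with the linearization in Lemma~\ref{lem:backbone}, inherits $D_0\propto\sigma_{\mathrm{adv}}^2$ from $\Sigma$.

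\emph{Diffusion approximation.} The linearized recursion becomes
\[
e_{k+1}-e_k=-\eta K e_k+\mu(e_k-e_{k-1})+\eta\,G^{-1/2}\sqrt{D_0}\,\epsilon_k,
\]
with uncorrelated unit-variance innovations $\epsilon_k$, since fresh groups are drawn each step. Summing the innovations over a time window of length $t$ gives $t/\delta$ steps with total variance $(t/\delta)\eta^2D_0/G$. By Donsker's invariance principle, the piecewise-constant partial sums converge to $\sqrt{\eta^2D_0/(\delta G)}\,W(t)$.

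Dividing by $\eta$ as in Lemma~\ref{lem:momentum}, the forcing enters the ODE $\Meff\ddot e+\Ceff\dot e+\Keff e$ as $\sqrt{D/G}\,\xi(t)$ with $D=D_0\,\delta/\cdots$, a constant times $D_0$ whose exact factor depends on the step-to-time convention. This establishes $D\propto D_0$.

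\emph{Main obstacle.} The hardest step is the first one: the non-independence induced by the group normalization. The difficulty is worst when $\std(r)$ is near zero, as with binary rewards and $p$ near $0$ or $1$, where the delta method for $1/\std(r)$ breaks down. I would first restrict to groups with $\std(r)>0$, as GRPO effectively does by zeroing degenerate groups. I would also assume $p^\star$ is bounded away from $1$, so that $\sigma_r$ is bounded below, and state the result conditionally on that.
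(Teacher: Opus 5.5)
You follow the paper's three stages. The paper applies the Lindeberg CLT to the $G$ i.i.d.\ completions. It projects $\zeta$ onto $u=\nabla_\theta p/\lVert\nabla_\theta p\rVert$; your unnormalized $\nabla_\theta p$ differs only by the constant $\lVert\nabla_\theta p\rVert^2$ and is the correct first-order change in $p$. It then invokes the SGD-as-SDE closure without deriving it.

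The one place you go beyond the paper, controlling the dependence introduced by the group normalization, contains a step that fails: the second correction term is not an $O(G^{-1/2})$ mean shift. Take $\nabla J=\E[\tilde s_i]=\nabla_\theta p/\sigma_r$, so that $\frac1G\sum_i\tilde A_i\nabla_\theta\log\pi_\theta(o_i\mid q)=\nabla J+O_P(G^{-1/2})$, and note
\[
\frac{\std(r)-\sigma_r}{\sigma_r}=\frac1G\sum_{i=1}^G\tfrac12\bigl(\tilde A_i^2-1\bigr)+O_P(G^{-1}).
\]
Up to $O_P(G^{-1})$, the term therefore equals $-\nabla J\cdot\frac1G\sum_i\frac12(\tilde A_i^2-1)$. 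This is a centered i.i.d.\ average whose mean is only $O(G^{-1})$ but whose fluctuation is exactly of order $G^{-1/2}$. It is nondegenerate in general; for binary rewards this holds whenever $p\neq\frac12$, since $\tilde A_i^2$ then takes the two distinct values $(1-p)/p$ and $p/(1-p)$.

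After multiplying by $\sqrt G$, this term converges to a Gaussian jointly correlated with your main term, so Slutsky's lemma cannot discard it. It also cannot be absorbed into a deterministic drift. It multiplies the policy-gradient mean, which does not vanish at $p^\star$ (there it is balanced by the KL gradient). Its projection onto $\nabla_\theta p$ carries the nonzero factor $\lVert\nabla_\theta p\rVert^2/\sigma_r$, so it feeds $D_0$ at the same order as your main term. As written, your argument therefore identifies the wrong limiting covariance, $\operatorname{Cov}(\tilde s_i)$.

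The repair is to put this term into the CLT rather than discard it. Apply the multivariate CLT to the i.i.d.\ vectors $\bigl(r_i\nabla_\theta\log\pi_\theta(o_i\mid q),\,\nabla_\theta\log\pi_\theta(o_i\mid q),\,r_i,\,r_i^2\bigr)$. Then apply the delta method to $\bigl(\frac1G\sum_i r_i\nabla_\theta\log\pi_\theta(o_i\mid q)-\bar r\,\frac1G\sum_i\nabla_\theta\log\pi_\theta(o_i\mid q)\bigr)/\std(r)$. The $\bar r$-derivative vanishes by the score identity, exactly as in your first correction. The result is $\zeta\Rightarrow\mathcal N(0,\Sigma)$ with $\Sigma=\operatorname{Cov}\bigl(\tilde s_i-\tfrac12\tilde A_i^2\,\nabla J\bigr)$.

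Gaussianity and the $1/G$ scaling survive, and these are all that Proposition~\ref{prop:group} uses. The lemma's hypothesis $\Sigma\propto\sigma_{\mathrm{adv}}^2$ should then be imposed on this asymptotic covariance. The paper's proof never meets the issue because it applies the CLT directly, as if the normalized scores were i.i.d., and leaves $\Sigma$ as whatever covariance the limit has. Your instinct to check the normalization is right; what fails is classifying the normalization term as drift rather than noise.
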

\begin{proof}
The advantages \eqref{eq:adv} are formed from $G$ i.i.d.\ completions, so the
Lindeberg central limit theorem gives $\widehat{\nabla J}-\nabla J=G^{-1/2}\zeta$ with
$\zeta$ asymptotically $\mathcal N(0,\Sigma)$. Writing
$u=\nabla_\theta p/\lVert\nabla_\theta p\rVert$ for the unit direction along which
$e=p-p^\star$ responds to $\theta$, the scalar forcing $u^\top\zeta$ has variance
$D_0=u^\top\Sigma u\propto\sigma_{\mathrm{adv}}^2$, and the projection preserves the
$1/G$ scaling. Passing from this martingale-difference forcing to a
$\delta$-correlated white noise is a diffusion (weak-convergence) approximation---the
standard SGD-as-SDE closure, exact in the small-step limit under the usual
martingale-CLT conditions; it is the one genuinely non-algebraic step in the
reduction, and we do not re-derive it here.
\end{proof}
 
\begin{theorem}[Reduced-order GRPO dynamics]
\label{thm:main}
Under Assumption~\ref{ass:mf}, the order parameter $e=p-p^\star$ evolves, to second
order in $\delta$ and $\tau$, as the stochastically forced damped oscillator
\begin{equation}
\boxed{\;\Meff\,\ddot e+\Ceff\,\dot e+\Keff\,e=\sqrt{D/G}\;\xi(t)\;}
\label{eq:main}
\end{equation}
with $\langle\xi(t)\xi(t')\rangle=\delta(t-t')$ and the coefficients of
\eqref{eq:effcoef}. At $\delta=1$,
\begin{equation}
\Meff=\frac{(1+\mu)+K\eta\tau^2}{2\eta},
\qquad
\Ceff=\frac{(1-\mu)-K\eta\tau}{\eta},
\qquad
\Keff=K .
\label{eq:final}
\end{equation}
\end{theorem}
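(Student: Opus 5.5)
The plan is to assemble the theorem from Lemmas~\ref{lem:backbone}--\ref{lem:noise}, composed in the order in which the update \eqref{eq:update} builds the dynamics. Write $e_k=p(\theta_k)-p^\star$ throughout.

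\emph{Step 1: reduce to a scalar heavy-ball recurrence.} By Assumption~\ref{ass:mf}, the order parameter summarizes $\theta$ to leading order. Projecting \eqref{eq:update} onto the direction $u=\nabla_\theta p/\lVert\nabla_\theta p\rVert$ and using Lemma~\ref{lem:pg} for the on-policy gradient, Lemma~\ref{lem:backbone} linearizes the drift at $p^\star$ with restoring coefficient $K$. This gives
\[
e_{k+1}-e_k=-\eta K e_{k-K_{\mathrm{ref}}}+\mu(e_k-e_{k-1})+\eta\,G^{-1/2}u^\top\zeta_k .
\]
The lag index appears because the gradient is formed from samples of $\theta_{\mathrm{old}}=\theta_{k-K_{\mathrm{ref}}}$. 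The $C^3$ hypothesis on $g$ bounds the linearization remainder by $O(e^2)$, which I absorb into the neglected terms.

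\emph{Step 2: apply the deterministic lemmas.} With the noise switched off and no lag, Lemma~\ref{lem:momentum} turns the recurrence into $m_\mu\ddot e+c_\mu\dot e+Ke=0$ with tracking error $O(\delta^2)$. Restoring the lag puts the force at $e(t-\tau)$ with $\tau=K_{\mathrm{ref}}\delta$. Lemma~\ref{lem:lag} then replaces the resulting delay equation by the ODE with coefficients \eqref{eq:effcoef}, up to $O(\tau^3)$.

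One consistency point needs care. Lemma~\ref{lem:momentum} is stated for the undelayed map, so I must argue that the delay expansion and the heavy-ball expansion commute at second order. My approach is to Taylor-expand $e_{k-K_{\mathrm{ref}}}$ directly in the discrete recurrence and collect orders together. The cross terms are of order $\delta\tau$ times $\eta K$, and I expect them to be third order in the joint small parameters $(\eta K,\tau)$, so they lie beyond the stated truncation.

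\emph{Step 3: the noise.} Lemma~\ref{lem:noise} identifies the projected forcing as having variance $D_0/G$. After dividing by $\eta$, the same normalization that produced $m_\mu$ and $c_\mu$, the diffusion limit gives the right-hand side $\sqrt{D/G}\,\xi(t)$, with $D\propto D_0/\eta^2$ in the appropriate time units. Because the equation is linear, superposition lets me add the stochastic forcing to the deterministic reduction without changing the coefficients. That yields \eqref{eq:main}.

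\emph{Step 4: closed form.} Set $\delta=1$ and substitute \eqref{eq:hb} into \eqref{eq:effcoef}:
\[
\Meff=\frac{1+\mu}{2\eta}+\frac{K\tau^2}{2}=\frac{(1+\mu)+K\eta\tau^2}{2\eta},
\qquad
\Ceff=\frac{1-\mu}{\eta}-K\tau=\frac{(1-\mu)-K\eta\tau}{\eta},
\]
with $\Keff=K$. This is \eqref{eq:final}.

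The main obstacle is Step 2's joint truncation. At $\delta=1$ the expansion parameter is no longer $\delta$. The genuine small parameters are $\eta K$ and $\tau$, which is also why the statement frames the result as second order in $\delta$ and $\tau$. I would first make the tracking statement uniform in these parameters by working with the characteristic roots. Concretely, I would compare the roots of $z^{K_{\mathrm{ref}}+1}-(1+\mu)z^{K_{\mathrm{ref}}}+\mu z^{K_{\mathrm{ref}}-1}+\eta K=0$ near $z=1$ with $e^{s}$ for $s$ a root of $\Meff s^2+\Ceff s+K=0$. If this root comparison is too delicate, I would fall back to stating the theorem as a formal second-order modified equation and deferring the quantitative error bounds to the lemmas.
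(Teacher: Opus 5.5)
Your route is the paper's: its proof of Theorem~\ref{thm:main} is exactly this composition, followed by substituting \eqref{eq:hb} into \eqref{eq:effcoef} at $\delta=1$. Lemma~\ref{lem:backbone} supplies the restoring term, Lemmas~\ref{lem:momentum} and~\ref{lem:lag} supply the inertia, base damping and delay corrections, and Lemma~\ref{lem:noise} supplies the forcing. Your Step~4 algebra is correct. The paper does not attempt your root comparison for the delayed map; it stays at the level of your fallback and checks the truncation numerically in Section~\ref{sec:oscillatory}. Two details of your elaboration are off, although neither affects the statement.

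\textbf{The commutation issue in Step~2.} It disappears, but not for the reason you give. In the direct expansion of the delayed recurrence, only $e_{k\pm1}$ are Taylor-expanded in $\delta$, while $e_{k-K_{\mathrm{ref}}}=e(t-\tau)$ enters unchanged. Dividing by $\eta$ therefore gives $m_\mu\ddot e+c_\mu\dot e+K\,e(t-\tau)=0$, up to the same remainder as in Lemma~\ref{lem:momentum}. This is exactly the delay equation from which Lemma~\ref{lem:lag} starts, so there are no cross terms at all. Your dismissal would not have worked had they existed. In the $(\eta K,\tau)$ counting you adopt for $\delta=1$, a term of size $\delta\tau\,\eta K=\tau\,\eta K$ is second order. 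That is the same order as the $-K\eta\tau$ correction you keep in the numerator of $\Ceff$, so it could not be discarded as beyond the truncation.

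\textbf{The noise normalization in Step~3.} Your own recurrence carries the noise as $\eta\,G^{-1/2}u^\top\zeta_k$. Dividing by $\eta$ cancels that factor and leaves per-step variance $D_0/G$. At $\delta=1$ this gives $D\propto D_0$, as Lemma~\ref{lem:noise} states, not $D_0/\eta^2$. As a check, take $\mu=\tau=0$. The map $e_{k+1}=(1-\eta K)e_k+\eta\,G^{-1/2}u^\top\zeta_k$ has stationary variance $\approx\eta D_0/(2GK)$. This matches $D/(2G\Ceff\Keff)=D\eta/(2GK)$ only for $D=D_0$. Your scaling would make the fluctuation floor fall like $1/\eta$ instead of growing like $\eta$.
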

\begin{proof}
Combine Lemmas \ref{lem:pg}--\ref{lem:noise}: Lemma~\ref{lem:backbone} supplies the
restoring term, Lemma~\ref{lem:momentum} the inertia and base damping,
Lemma~\ref{lem:lag} the delay corrections, and Lemma~\ref{lem:noise} the forcing.
Substituting \eqref{eq:hb} into \eqref{eq:effcoef} and setting $\delta=1$ gives
\eqref{eq:final}.
\end{proof}
 
\subsection{Consequences}

\begin{proposition}[Damping ratio and the overdamped regime]
\label{prop:zeta}
The damping ratio of \eqref{eq:main} is
\begin{equation}
\zeta=\frac{\Ceff}{2\sqrt{\Keff\Meff}}
=\frac{(1-\mu)-K\eta\tau}{\sqrt{2K\eta\bigl[(1+\mu)+K\eta\tau^2\bigr]}} .
\label{eq:zeta}
\end{equation}
In the absence of momentum and lag $(\mu=0,\tau=0)$, $\zeta=(2K\eta)^{-1/2}$, which
exceeds $1$ for the usual small $\eta$. The system is then overdamped: the inertial
mode decays immediately and $e$ relaxes on the slow pole
$-\Keff/\Ceff=-K\eta$, recovering the first-order law of
Lemma~\ref{lem:backbone}.
\end{proposition}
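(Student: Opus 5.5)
The plan is to compute $\zeta$ directly from the coefficients of Theorem~\ref{thm:main} and then read off the overdamped regime from the two roots of the characteristic polynomial. First, take the standard definition $\zeta=\Ceff/(2\sqrt{\Keff\Meff})$ for $\Meff\ddot e+\Ceff\dot e+\Keff e=0$; this is what makes the discriminant $\Ceff^2-4\Keff\Meff$ change sign at $\zeta=1$. Substitute \eqref{eq:final}. Then
\[
4\Keff\Meff=\frac{2K\bigl[(1+\mu)+K\eta\tau^2\bigr]}{\eta}.
\]
Multiply the numerator and the square root through by $\eta$, using $\eta\sqrt{2K[(1+\mu)+K\eta\tau^2]/\eta}=\sqrt{2K\eta[(1+\mu)+K\eta\tau^2]}$. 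This gives \eqref{eq:zeta}. The whole step is algebra, and it needs $\Meff>0$, which holds automatically since $\mu\ge0$.

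Second, set $\mu=0$ and $\tau=0$. The numerator becomes $1$ and the denominator $\sqrt{2K\eta}$, so $\zeta=(2K\eta)^{-1/2}$. This exceeds $1$ exactly when $K\eta<1/2$. That is the ``usual small $\eta$'' condition, and it is implied by the small per-step contraction $\eta K\ll1$ that was already used as the truncation parameter in the Setup.

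Third, to justify the overdamped reading, factor $\Meff s^2+\Ceff s+\Keff=0$ at $\mu=\tau=0$, where $\Meff=1/(2\eta)$, $\Ceff=1/\eta$ and $\Keff=K$. The roots are
\[
s_\pm=\frac{-\Ceff\pm\sqrt{\Ceff^2-4\Meff\Keff}}{2\Meff}.
\]
When $\zeta>1$ both roots are real and negative. Expand in the small quantity $\epsilon=4\Meff\Keff/\Ceff^2=2K\eta$. This gives a slow pole
\[
s_+=-\Keff/\Ceff\,\bigl(1+O(K\eta)\bigr)=-K\eta\bigl(1+O(K\eta)\bigr)
\]
and a fast pole $s_-\approx-\Ceff/\Meff=-2$ per step. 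The fast mode decays on an $O(1)$-step timescale, which is the sense of ``immediately''. What remains is $e(t)\approx e_0\exp(-K\eta t)$.

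Finally, match this rate to Lemma~\ref{lem:backbone}. At $\delta=1$, the relaxation rate $\lambda=-g'(p^\star)$ of the discrete map $e_{k+1}=(1-\eta K)e_k$ is $\eta K$ per step. I would check this identification explicitly, because the statement silently equates the per-step contraction $\eta K$ with $\lambda$, with $c$ absorbing the discretization constant.

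The main obstacle is interpretive rather than computational. Making ``recovers the first-order law'' precise requires a singular-perturbation or slow-manifold argument: after the transient, the trajectory stays $O(K\eta)$-close to the solution of $\Ceff\dot e+\Keff e=0$. I would state this via the explicit two-exponential solution with matched initial data, bounding the fast-mode amplitude by $|\dot e_0|/|s_--s_+|$.
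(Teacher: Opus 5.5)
Your proposal is correct and follows the paper's proof: substitute \eqref{eq:final} directly, then at $\mu=\tau=0$ split the roots of $\Meff s^2+\Ceff s+\Keff=0$ into the slow pole $-\Keff/\Ceff=-K\eta$ and the fast pole $-\Ceff/\Meff=-2$; your explicit expansion in $2K\eta$ and your check that $c=\Ceff=1/\eta$ just make the paper's terse argument more careful. One small correction to your optional slow-manifold step: matching initial data gives fast-mode amplitude $(\dot e_0-s_+e_0)/(s_--s_+)$, which from rest is about $K\eta\,\lvert e_0\rvert/2$ rather than zero, so your bound $\lvert\dot e_0\rvert/\lvert s_--s_+\rvert$ needs an extra $O(K\eta)\lvert e_0\rvert$ term, and this still gives the $O(K\eta)$ closeness you claim.
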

\begin{proof}
Direct substitution of \eqref{eq:final}. For $\mu=\tau=0$,
$\Ceff=1/\eta$, $\Meff=1/(2\eta)$, $\Keff=K$, whence
$\zeta=(1/\eta)/\!\bigl(2\sqrt{K/(2\eta)}\bigr)=(2K\eta)^{-1/2}$. When
$\zeta\gg1$ the roots separate, with product $\Keff/\Meff=2K\eta$, into a slow pole
$-\Keff/\Ceff=-K\eta$ and a fast pole $-\Ceff/\Meff=-2$; the fast mode decays
immediately and the slow pole governs the observed relaxation.
\end{proof}
 
\begin{proposition}[Stability threshold in the refresh interval]
\label{prop:stab}
The effective damping is negative, and \eqref{eq:main} is linearly unstable, iff
\begin{equation}
K\,\eta\,K_{\mathrm{ref}}>1-\mu .
\label{eq:threshold}
\end{equation}
\end{proposition}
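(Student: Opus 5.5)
The plan is to read the threshold directly off the closed-form coefficients of Theorem~\ref{thm:main}, using \eqref{eq:final} at $\delta=1$ with $\tau=K_{\mathrm{ref}}$.

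First, I will show that $\Meff>0$ and $\Keff>0$ throughout the regime of interest. Positivity of $\Meff=\bigl[(1+\mu)+K\eta\tau^2\bigr]/(2\eta)$ is immediate from $\mu\in[0,1)$, $\eta>0$, and $K>0$. Positivity of $K$ comes from Lemma~\ref{lem:backbone}, since $K=\lambda c$ with $\lambda=-g'(p^\star)>0$ by the stability clause of Assumption~\ref{ass:mf}.

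Second, I will apply the Routh--Hurwitz criterion to the deterministic part of \eqref{eq:main}. The characteristic polynomial is $\Meff s^2+\Ceff s+\Keff$. With $\Meff,\Keff>0$, both roots lie in the open left half-plane iff $\Ceff>0$. If $\Ceff<0$, the roots satisfy $s_1+s_2=-\Ceff/\Meff>0$, so at least one root has positive real part. This holds whether the roots are real, since their product $\Keff/\Meff>0$ forces both to be positive, or complex, with common real part $-\Ceff/(2\Meff)>0$. Hence negative effective damping is equivalent to linear instability. The additive noise $\sqrt{D/G}\,\xi$ does not change linear stability: the mean obeys the homogeneous equation, and the second moments of the Ornstein--Uhlenbeck process are bounded iff the drift matrix is Hurwitz. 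I will note this in one line.

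Third, I will show that $\Ceff<0$ is the stated inequality. Since $\eta>0$, $\Ceff=\bigl[(1-\mu)-K\eta\tau\bigr]/\eta<0$ iff $K\eta\tau>1-\mu$, and substituting $\tau=K_{\mathrm{ref}}$ gives \eqref{eq:threshold}.

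The main obstacle is the boundary case and the honesty of the ``iff''. At $\Ceff=0$ the roots are purely imaginary, giving marginal stability rather than instability, so the strict inequality is the correct one. More seriously, the equivalence holds for the truncated ODE of Lemma~\ref{lem:lag}, not for the exact delay equation. I will state explicitly that the claim concerns \eqref{eq:main} as written: the threshold is derived within the $O(\tau^3)$ truncation and assumes $\lambda\tau\ll1$. I will also remark that $K\eta K_{\mathrm{ref}}\approx1-\mu$ lies at the edge of that validity regime, so agreement with the exact map must be checked separately, as the paper does in Section~\ref{sec:oscillatory}.
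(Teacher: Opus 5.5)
Your proposal is correct and follows the paper's own route. You read the sign of $\Ceff=[(1-\mu)-K\eta\tau]/\eta$ off \eqref{eq:final} at $\delta=1$, $\tau=K_{\mathrm{ref}}$, and you make explicit the Routh--Hurwitz step that the paper compresses into one sentence: positivity of $\Meff$ and $\Keff$, the real and complex root cases, and the marginal $\Ceff=0$ boundary. One small tension to tidy: at $\Ceff=0$ the drift matrix is not Hurwitz, so under the mean-square criterion you invoke for the noise the boundary would already count as unstable, and the strict inequality is right only if you define linear instability by the roots of the deterministic part, as the paper implicitly does.
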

\begin{proof}
With $\delta=1$, $\tau=K_{\mathrm{ref}}$, and $\Ceff=\bigl[(1-\mu)-K\eta\tau\bigr]/\eta$
from \eqref{eq:final}, the sign condition $\Ceff<0$ is equivalent to
$K\eta K_{\mathrm{ref}}>1-\mu$. A negative damping coefficient places the
characteristic roots in the right half-plane.
\end{proof}
 
\begin{proposition}[Group-size invariance of the deterministic dynamics]
\label{prop:group}
The fixed point $p^\star$ and the deterministic coefficients \eqref{eq:final} are
independent of $G$; $G$ enters \eqref{eq:main} only through the forcing. Consequently,
in the stable regime $K\eta K_{\mathrm{ref}}<1-\mu$ of
Proposition~\ref{prop:stab}, the stationary fluctuation of the order parameter obeys
\begin{equation}
\langle e^2\rangle_\infty\;\propto\;\frac{D}{G\,\Ceff\,\Keff}\;\propto\;\frac1G ,
\label{eq:floor}
\end{equation}
while the mean trajectory is unchanged.
\end{proposition}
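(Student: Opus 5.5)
The proof splits into three steps: $G$-independence of the deterministic part, the mean trajectory, and the stationary variance of a linear stochastic oscillator.

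\textbf{Step 1: $G$-independence of the deterministic part.} I would inspect the ingredients of \eqref{eq:final}: $\mu$, $\eta$, $\tau=K_{\mathrm{ref}}$, and $K=-c\,g'(p^\star)$. The hyperparameters $\mu,\eta,K_{\mathrm{ref}}$ are chosen independently of $G$. The drift $g$ in Assumption~\ref{ass:mf} is the gradient-flow projection of the population gradient $\nabla J$ of Lemma~\ref{lem:pg}, i.e.\ of the expected update. Lemma~\ref{lem:noise} writes $\widehat{\nabla J}=\nabla J+G^{-1/2}\zeta$ with $\zeta$ mean zero, so $G$ appears only in the fluctuation term. Hence $g$, its zero $p^\star$, and its slope $g'(p^\star)$ are $G$-independent, and so are $K$, $\Meff$, $\Ceff$, $\Keff$.

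The weak point is that $\bar r$ and $\std(r)$ in \eqref{eq:adv} are themselves computed from the $G$ samples, so $\E[\widehat{\nabla J}]$ carries an $O(1/G)$ bias. I would handle this by stating the claim to leading order in $1/G$, matching the ``to leading order'' qualifier in Lemma~\ref{lem:noise}. This is the main obstacle, and I expect the rigorous statement to be asymptotic rather than exact.

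\textbf{Step 2: the mean trajectory.} Take expectations in \eqref{eq:main}. The equation is linear and $\langle\xi\rangle=0$, so $m(t)=\E[e(t)]$ satisfies the deterministic oscillator $\Meff\ddot m+\Ceff\dot m+\Keff m=0$. Its coefficients are $G$-independent by Step 1, and its initial data come from the common initialization. So the mean trajectory does not depend on $G$.

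\textbf{Step 3: the stationary variance.} Write \eqref{eq:main} as the two-dimensional Ornstein--Uhlenbeck system in $(e,v=\dot e)$:
\[
\dot e=v,\qquad \Meff\dot v=-\Ceff v-\Keff e+\sqrt{D/G}\,\xi .
\]

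First, I need a stationary law to exist. In the regime $K\eta K_{\mathrm{ref}}<1-\mu$, Proposition~\ref{prop:stab} gives $\Ceff>0$. I also need $\Meff>0$, which is immediate from \eqref{eq:final}, and $\Keff=K>0$, which follows from Lemma~\ref{lem:backbone}. With all three positive, Routh--Hurwitz makes the drift matrix Hurwitz, so a stationary Gaussian law exists.

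Next, solve the stationary Lyapunov equation $A\Sigma+\Sigma A^\top+BB^\top=0$, where $BB^\top$ has the single nonzero entry $D/(G\Meff^2)$ in the $(v,v)$ position.
\begin{itemize}
\item The $(e,e)$ equation gives $\langle ev\rangle=0$.
\item The $(e,v)$ equation gives $\Meff\langle v^2\rangle=\Keff\langle e^2\rangle$.
\item The $(v,v)$ equation gives $2\Ceff\langle v^2\rangle/\Meff=D/(G\Meff^2)$.
\end{itemize}
Combining these yields
\[
\langle e^2\rangle_\infty=\frac{D}{2G\,\Ceff\,\Keff}.
\]
This is \eqref{eq:floor}. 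Since $D$, $\Ceff$ and $\Keff$ are $G$-independent by Step 1, $\langle e^2\rangle_\infty\propto 1/G$.
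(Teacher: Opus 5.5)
Your proposal is correct and follows the paper's proof: you read the $G$-freeness of $p^\star$ and of $\Meff,\Ceff,\Keff$ off Lemma~\ref{lem:noise}, then solve the phase-space Lyapunov equation, and you compute its position entry correctly as $D/(2G\,\Ceff\,\Keff)$; your Routh--Hurwitz check and your expectation argument for the mean of the linear equation make explicit what the paper leaves implicit, and your $O(1/G)$-bias caveat on the drift is a genuine refinement the paper does not raise. The one misattribution is in your last line: Step~1 only shows that the drift is $G$-free, not that $D\propto\sigma_{\mathrm{adv}}^2$ is, and the paper obtains $\langle e^2\rangle_\infty\propto1/G$ only by explicitly treating $\sigma_{\mathrm{adv}}$, hence $D$, as an exogenous $G$-independent constant---you should state this as an assumption rather than cite Step~1 for it, and it is precisely the assumption the experiments find to fail at the residual level.
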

\begin{proof}
By Lemma~\ref{lem:noise}, $G$ enters \eqref{eq:main} only through the forcing
intensity $D/G$; the coefficients \eqref{eq:final} are $G$-free, so the fixed point
and the deterministic trajectory are independent of $G$. For the fluctuation,
linearize about $p^\star$ and write \eqref{eq:main} in phase space $x=(e,\dot e)$ as
$\dot x=\mathbf A x+\mathbf b\,\xi$ with
$\mathbf A=\left(\begin{smallmatrix}0&1\\-\Keff/\Meff&-\Ceff/\Meff\end{smallmatrix}\right)$
and $\mathbf b=(0,\sqrt{D/G}/\Meff)^\top$. In the stable regime $\mathbf A$ is Hurwitz
and the stationary covariance solves the Lyapunov equation
$\mathbf A\Sigma_\infty+\Sigma_\infty\mathbf A^\top+\mathbf b\mathbf b^\top=0$, whose
position entry is the standard damped-oscillator result
$\langle e^2\rangle_\infty=D/(2G\,\Ceff\,\Keff)$, in which the effective mass $\Meff$
cancels. Hence $\langle e^2\rangle_\infty\propto1/G$ while the mean trajectory is
unchanged. This is a leading-order statement about the linearized dynamics: it treats
$\sigma_{\mathrm{adv}}$, hence $D$, as an exogenous $G$-independent constant, an
approximation whose empirical limits are examined in Remark~\ref{rem:corr} and
Section~\ref{sec:results}.
\end{proof}
 
\begin{remark}[Empirical correspondences]
\label{rem:corr}
Proposition~\ref{prop:zeta} predicts that runs with small learning rate, no
momentum, and frequent refresh are overdamped and exhibit monotone saturation to
$p^\star$, matching the regime in which a first-order fixed-point fit attains high
$R^2$. Proposition~\ref{prop:stab} predicts divergence past a sharp refresh-interval
threshold, consistent with the instability of stale-policy training.
Proposition~\ref{prop:group} predicts that reducing $G$ leaves the mean trajectory
intact and raises only the variance, consistent with reports that two rollouts per
prompt match sixteen up to noise. This last correspondence is a leading-order
statement: it treats the within-group spread $\sigma_{\mathrm{adv}}$ as an
exogenous, $G$-independent constant, whereas empirically $\sigma_{\mathrm{adv}}$
drifts and carries a residual $G$-dependence of its own
(Section~\ref{sec:results}), so the mean invariance is the robust prediction while
the $1/G$ variance law holds only to the extent that this leading-order treatment
of $\sigma_{\mathrm{adv}}$ does.
\end{remark}
 
\begin{remark}[Scope]
\label{rem:scope}
Theorem~\ref{thm:main} rests on two modeling steps, not one. The first is
Assumption~\ref{ass:mf}, the mean-field reduction to a scalar order parameter;
the second is the backward-error (modified-equation) expansion of
Lemmas~\ref{lem:momentum} and~\ref{lem:lag}, truncated at second order in the step
size $\delta$ and the refresh lag $\tau$, which drops $O(\delta^3,\tau^3)$ terms
and is therefore accurate only for small $\delta$ and $\tau$. Within that
truncation the deterministic algebra of Lemmas~\ref{lem:pg}--\ref{lem:lag} is exact
(verified symbolically; see Appendix~\ref{app:symbolic}), the sole non-algebraic step
being the diffusion (white-noise) approximation of Lemma~\ref{lem:noise}, and
Section~\ref{sec:oscillatory} quantifies the truncation error directly against the
exact dynamics. The
coefficients are moreover local linearizations at $p^\star$, so \eqref{eq:main}
describes the approach to the fixed point rather than global dynamics, and the
curvature $K$ is identifiable from the early-training slope $\lambda=K/c$ rather
than predicted. The appropriate reading is a reduced-order model conditional on
the mean-field reduction and the small-step expansion, not an unconditional
account of GRPO.
\end{remark}
 
\subsection{The empirical reward-saturation law as an overdamped limit}
\label{sec:saturation}
 
The reduced-order equation \eqref{eq:main} makes contact with empirical work that fits
GRPO reward trajectories with simple parametric forms
\citep{nimmaturi2025,ghosh2026}. In particular, \citet{ghosh2026} report that across
model families and scales the reward follows the saturating law
\begin{equation}
R(t,M)=R_\infty\bigl(1-e^{-t/M^{0.3}}\bigr),
\label{eq:emp}
\end{equation}
with a steady-state reward $R_\infty$ that grows with model size $M$ and a single
size-dependent timescale $M^{0.3}$. We show that \eqref{eq:emp} is the overdamped
limit of Theorem~\ref{thm:main}, and that retaining the inertial term resolves the
early ``slow-start'' phase that \eqref{eq:emp} cannot represent.
 
Throughout we identify reward with the order parameter, $R\equiv p$, write
$R_\infty\equiv p^\star$ for the plateau, and take the trajectory to start from rest,
$\dot p(0)=0$, at initial reward $R_0$. By Proposition~\ref{prop:group} the forcing
sets only the stationary fluctuation, so the mean trajectory obeys the deterministic
part of \eqref{eq:main}.
 
\begin{corollary}[Saturation law]
\label{cor:sat}
In the overdamped regime $\zeta\ge1$, the mean reward of Theorem~\ref{thm:main} is
\begin{equation}
R(t)=R_\infty-(R_\infty-R_0)\,e^{-\lambda t},
\qquad \lambda=\frac{\Keff}{\Ceff},
\label{eq:sat}
\end{equation}
and, when $R_0\ll R_\infty$, reduces to the empirical form \eqref{eq:emp} with
timescale $\tau=1/\lambda=\Ceff/\Keff=\bigl[(1-\mu)-K\eta K_{\mathrm{ref}}\bigr]/(\eta K)$.
\end{corollary}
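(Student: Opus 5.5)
The plan is to work from the deterministic part of \eqref{eq:main}. By Proposition~\ref{prop:group} the forcing only sets the stationary spread, so the mean reward $R=p^\star+e$ obeys
\[
\Meff\ddot e+\Ceff\dot e+\Keff e=0,
\qquad e(0)=R_0-R_\infty,\quad \dot e(0)=0 .
\]
I will first write the exact overdamped solution. For $\zeta\ge1$ the characteristic polynomial $\Meff s^2+\Ceff s+\Keff$ has real negative roots $s_\pm=\bigl[-\Ceff\pm\sqrt{\Ceff^2-4\Meff\Keff}\bigr]/(2\Meff)$. I will handle $\zeta=1$ separately, where the solution takes the form $(a+bt)e^{st}$. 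Imposing the two initial conditions gives
\[
e(t)=e(0)\,\frac{s_-e^{s_+t}-s_+e^{s_-t}}{s_--s_+}.
\]
The task is then to show that this collapses to a single exponential with rate $\Keff/\Ceff$.

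The key step is the pole separation already established in Proposition~\ref{prop:zeta}. Write $\epsilon=\Meff\Keff/\Ceff^2=1/(4\zeta^2)$. Expanding the square root gives a slow pole $s_+=-\Keff/\Ceff\,(1+O(\epsilon))$ and a fast pole $s_-=-\Ceff/\Meff\,(1+O(\epsilon))$. The ratio $s_+/s_-=O(\epsilon)$, so the fast term has amplitude $O(\epsilon)$ and decays on the time scale $\Meff/\Ceff$. After that transient, $e(t)=e(0)e^{-\lambda t}(1+O(\epsilon))$ with $\lambda=\Keff/\Ceff$, which is \eqref{eq:sat}. Equivalently, one can argue by singular perturbation: once $\Meff\ddot e$ is negligible, the equation reduces to $\Ceff\dot e=-\Keff e$. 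This is the first-order backbone of Lemma~\ref{lem:backbone}, now with a damping coefficient that includes the lag correction.

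\textbf{Main obstacle: the statement's regime.} The corollary asserts the result for all $\zeta\ge1$, but the single exponential is exact only as $\zeta\to\infty$. Near $\zeta=1$ the slow rate is really $|s_+|$, which differs from $\Keff/\Ceff$ by up to a factor of $2$. The fast mode also produces the zero-slope start that a pure exponential does not have. I would therefore state the corollary as a leading-order identity in $1/\zeta^2$, uniform for $t\gg\Meff/\Ceff$. I would defer the retained-inertia correction to Proposition~\ref{prop:threephase}, noting that $\zeta\gg1$ holds in the typical small-$\eta$ regime of Proposition~\ref{prop:zeta}.

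The final identifications are algebra. Setting $R_0\ll R_\infty$ turns \eqref{eq:sat} into $R_\infty(1-e^{-\lambda t})$, which matches \eqref{eq:emp} once $M^{0.3}$ is identified with $1/\lambda$. Substituting \eqref{eq:final} with $\tau=K_{\mathrm{ref}}$ gives
\[
1/\lambda=\Ceff/\Keff=\bigl[(1-\mu)-K\eta K_{\mathrm{ref}}\bigr]/(\eta K).
\]
This quantity is positive precisely in the stable regime of Proposition~\ref{prop:stab}. Positivity is consistent with $\zeta\ge1$, since $\zeta\ge1$ already forces $\Ceff>0$.
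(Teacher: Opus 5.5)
Your proposal is correct and takes the same route as the paper's proof: pole separation from Proposition~\ref{prop:zeta}, the slow pole $-\Keff/\Ceff$ governing once the fast mode has decayed, then substitution of $e(0)=R_0-R_\infty$ and of \eqref{eq:final}. Your caveat is also well founded, since the paper's proof itself argues under $\zeta\gg1$ rather than the stated $\zeta\ge1$, and at $\zeta=1$ the true decay rate is indeed $2\Keff/\Ceff$. One refinement: replacing $e^{s_+t}$ by $e^{-\lambda t}$ costs $O(\epsilon)$ uniformly only in \emph{absolute} error (or in relative error while $\lambda t=O(1)$), not in relative error for all $t\gg\Meff/\Ceff$, because $s_+=-\lambda(1+\epsilon+\dots)$ makes the ratio drift like $e^{-\lambda\epsilon t}$.
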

\begin{proof}
When $\zeta\gg1$ the two roots of \eqref{eq:main} separate
(Proposition~\ref{prop:zeta}); after the fast mode decays the slow pole
$-\Keff/\Ceff$ governs, so $e(t)=e(0)\,e^{-\lambda t}$ with $e=R-R_\infty$.
Substituting $e(0)=R_0-R_\infty$ gives \eqref{eq:sat}; dropping the $R_0/R_\infty$
term gives \eqref{eq:emp}.
\end{proof}
 
\noindent
Corollary~\ref{cor:sat} supplies mechanistic content for the three quantities in
\eqref{eq:emp}. The plateau is the GRPO fixed point, $R_\infty=p^\star(M)$, so its
growth with scale is the success amplification of \citet{mroueh2025} rather than a
fitted constant. The timescale is an inverse stiffness, $\tau=1/(\eta K)$ at
$\mu=K_{\mathrm{ref}}=0$, so the empirical exponent is a \emph{curvature}-scaling
exponent,
\begin{equation}
\tau\propto M^{0.3}\;\Longleftrightarrow\;K(M)\propto \eta^{-1}M^{-0.3}:
\label{eq:curv}
\end{equation}
a larger model approaches a higher fixed point along a flatter potential, consistent
with lazy/NTK-regime slowdown. Finally, the initial reward $R_0$ enters \eqref{eq:sat}
as the amplitude and is absent from \eqref{eq:emp} only through the approximation
$R_0\ll R_\infty$.
 
\begin{proposition}[Three-phase refinement]
\label{prop:threephase}
Retaining the inertial term, the critically-damped ($\zeta=1$) mean trajectory from
rest is
\begin{equation}
\frac{R(t)}{R_\infty}=1-\bigl(1+\omega_n t\bigr)e^{-\omega_n t},
\qquad \omega_n=\sqrt{\Keff/\Meff},
\label{eq:threephase}
\end{equation}
which satisfies $R'(0)=0$ and has an inflection at $t^\star=1/\omega_n$. The general
overdamped case replaces \eqref{eq:threephase} by the two-exponential response with
poles $s_{1,2}=-\omega_n\bigl(\zeta\mp\sqrt{\zeta^2-1}\bigr)$.
\end{proposition}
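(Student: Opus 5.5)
The plan is to solve the deterministic part of \eqref{eq:main}, as justified after Proposition~\ref{prop:group}: the mean trajectory obeys $\Meff\ddot e+\Ceff\dot e+\Keff e=0$. Dividing by $\Meff$ gives $\ddot e+2\zeta\omega_n\dot e+\omega_n^2 e=0$, with $\omega_n=\sqrt{\Keff/\Meff}$ and $\zeta$ as in \eqref{eq:zeta}. This holds because $\Ceff/\Meff=2\zeta\sqrt{\Keff/\Meff}$ follows directly from the definition $\zeta=\Ceff/(2\sqrt{\Keff\Meff})$. The characteristic polynomial is $s^2+2\zeta\omega_n s+\omega_n^2$, with roots $s_{1,2}=-\omega_n(\zeta\mp\sqrt{\zeta^2-1})$. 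This already settles the general overdamped claim: for $\zeta>1$ the roots are real, distinct and negative, and the response is a two-exponential combination $e=a_1e^{s_1t}+a_2e^{s_2t}$. The initial conditions $e(0)=R_0-R_\infty$ and $\dot e(0)=0$ fix $a_{1,2}$.

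For $\zeta=1$ there is a double root at $-\omega_n$, so the general solution is $e(t)=(a+bt)e^{-\omega_n t}$. From $e(0)=a=R_0-R_\infty$ and $\dot e(0)=b-\omega_n a=0$ I get $b=\omega_n a$, hence $e(t)=(R_0-R_\infty)(1+\omega_n t)e^{-\omega_n t}$. Writing $R=R_\infty+e$ and taking the same regime as Corollary~\ref{cor:sat}, namely $R_0\ll R_\infty$ (or equivalently measuring reward from the starting level, $R_0=0$), gives \eqref{eq:threephase}. I will state explicitly that the exact identity is the displayed law with prefactor $(R_\infty-R_0)$, and that it reduces to \eqref{eq:threephase} under this normalization.

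The derivative claims come next. Let $f(u)=1-(1+u)e^{-u}$ with $u=\omega_n t$. Then $f'(u)=ue^{-u}$, so $R'(0)=0$, which is the slow start. Also $f''(u)=(1-u)e^{-u}$, which vanishes and changes sign only at $u=1$, i.e.\ at $t^\star=1/\omega_n$. Since $R$ and $f$ differ by a positive constant factor, these properties carry over to $R$, and the same computation applies to the $(R_\infty-R_0)$ version.

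There is no substantial obstacle here; the argument is a standard linear ODE computation. The one delicate point is bookkeeping: the statement's normalization $R(t)/R_\infty$ implicitly takes $R_0=0$, and I will flag this rather than hide it. I will also note that the result concerns the mean trajectory only and is valid inside the linearized, second-order-truncated regime of Theorem~\ref{thm:main}.
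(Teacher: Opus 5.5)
Your proposal is correct and follows essentially the same route as the paper: solve the deterministic part of \eqref{eq:main} at the repeated pole $-\omega_n$ with zero initial velocity, then differentiate to obtain $R'(t)\propto t\,e^{-\omega_n t}$, which gives $R'(0)=0$ and the inflection at $t^\star=1/\omega_n$. You also flag that the normalization $R(t)/R_\infty$ holds exactly only for $R_0=0$, since otherwise the prefactor is $R_\infty-R_0$, and you check that $f''$ actually changes sign at $u=1$; both are bookkeeping the paper's one-line proof leaves implicit.
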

\begin{proof}
The step response of \eqref{eq:main} from rest at $\zeta=1$ has the repeated pole
$-\omega_n$, yielding \eqref{eq:threephase}. Differentiating,
$R'(t)=R_\infty\,\omega_n^2\,t\,e^{-\omega_n t}$, so $R'(0)=0$ and $R''(t^\star)=0$ at
$t^\star=1/\omega_n$.
\end{proof}
 
\noindent
Equation~\eqref{eq:threephase} reproduces the full three-phase trajectory reported by
\citet{ghosh2026}---a slow start ($R'(0)=0$), a rapid phase centered at the inflection
$t^\star=1/\omega_n$, and a plateau---whereas the single exponential \eqref{eq:emp} is
concave with maximal slope at $t=0$ and so has no slow-start phase. The inertia
responsible for the inflection is exactly the momentum-plus-lag mass $\Meff$ of
Lemma~\ref{lem:lag}. For empirical practice this implies that the quantity expected to
collapse across scales is not $R/R_\infty$ but the doubly normalized deviation
\begin{equation}
\frac{R(t)-R_0}{R_\infty-R_0}=1-(1+\omega_n t)\,e^{-\omega_n t},
\label{eq:collapse}
\end{equation}
which removes both the reward scale and the initial offset.
 
\subsection{A falsifiable prediction}
 
Fix $(\beta,\eta,\mu,G)$ and sweep the refresh interval $K_{\mathrm{ref}}$.
Equation~\eqref{eq:zeta} predicts a monotone-to-oscillatory transition in the
learning curve at $\zeta=1$, and divergence at the threshold \eqref{eq:threshold};
both critical values are pinned by the independently measurable slope $K$. We test
this prediction directly in Section~\ref{sec:oscillatory}, in an exact softmax
reduction where Assumption~\ref{ass:mf} holds without approximation.

\section{Experimental Setup}
\label{sec:experiments}
This section describes the experimental protocol used to test the reduced-order
model of Section~\ref{sec:formalism}. The design follows directly from what the
theory asks of the data. Theorem~\ref{thm:main} predicts a specific closed-form
trajectory toward a fixed point, so we need reward curves logged densely enough
to fit a three-parameter law; Proposition~\ref{prop:group} predicts that the
\emph{deterministic} trajectory is invariant to the group size $G$ while the
stationary fluctuation scales as $1/G$, so we run every configuration at two
group sizes; and the four diagnostics of Section~\ref{sec:intro} read out
internal training quantities---advantage spread, entropy, KL, gradient norm, and
the importance-sampling ratio---so we log each of them at every step rather than
only the scalar reward.

\subsection{Models}
\label{subsec:exp_models}

We study three open-weight models chosen to span the regimes in which
Assumption~\ref{ass:mf} is expected to hold to differing degrees:
\begin{itemize}
  \item \textbf{Nemotron-Mini-4B} \citep{nvidia2024nemotronmini}, a $4$B
  general-purpose instruction-tuned model;
  \item \textbf{DeepSeek-LLM-7B-Chat} \citep{deepseekai2024deepseekllm}, a $7$B
  chat model, the largest in our study and the one without explicit reasoning
  distillation;
  \item \textbf{DeepSeek-R1-Distill-Qwen-1.5B} \citep{guo2025deepseek}, a $1.5$B
  model distilled from R1 reasoning traces and therefore the only
  reasoning-specialized model of the three.
\end{itemize}
The spread in scale ($1.5$B--$7$B) and in reasoning specialization lets us
separate effects of the dynamics from effects of the base capability: the
reasoning-distilled model is the one for which the scalar order-parameter
reduction is expected to remain cleanest, a prediction we test directly through
the policy-concentration diagnostic of
Section~\ref{subsec:gpro_training_dynamics} onward.

\subsection{Training Setup}
\label{subsec:exp_training}

All three models are post-trained with GRPO \citep{shao2024deepseekmathpushinglimitsmathematical}
on the GSM8K training split \citep{cobbe2021gsm8k} of grade-school word problems,
using a binary verifiable reward that checks the parsed final answer against the
gold solution. Training uses parameter-efficient LoRA adapters
\citep{hu2021lora} (rank $r=16$, $\alpha=64$, dropout $0.05$) applied to the
query, key, value, output, and MLP up/down/gate projections, with the adapters
merged into the base weights for evaluation. The full configuration is given in
Table~\ref{tab:gsm8k_config} (Appendix~\ref{app:gsm8k_hyperparams}); the settings
most relevant to the theory are the learning rate $\eta=1\times10^{-5}$, the KL
anchoring weight $\beta=0.005$, the clipping coefficient $\epsilon=0.2$, the
sampling temperature $0.8$, and a training horizon of two epochs. The small
learning rate is deliberate: by Proposition~\ref{prop:zeta} it places the
product $K\eta K_{\mathrm{ref}}$ well below $1-\mu$, so every run is expected to
sit in the overdamped regime where the closed-form saturation law of
Corollary~\ref{cor:sat} applies. Probing the under-damped and unstable regimes of
Propositions~\ref{prop:stab}--\ref{prop:threephase} would require raising $\eta$
or the refresh interval $K_{\mathrm{ref}}$, which we leave to future work.

\paragraph{Group sizes and seeds.} The central manipulation is the GRPO group
size, the number of completions $G$ sampled per prompt to form the
group-relative advantage of Eq.~\eqref{eq:adv}. We run every model at
$G\in\{4,16\}$, a fourfold change in rollout budget, so that the group-size
invariance of Proposition~\ref{prop:group} can be tested as a controlled
comparison. To separate the deterministic trajectory from run-to-run noise---the
$1/G$ stationary fluctuation that the theory attributes to the forcing
term---each configuration is run over two random seeds $\{42, 3407\}$, and all
reported curves are seed-averaged with $\pm$ one standard deviation bands.

\paragraph{Logged quantities.} {\sloppy Beyond the training and held-out reward, we log at
each optimizer step the quantities required by the four diagnostics: the
within-group reward standard deviation $\sigma_{\mathrm{adv}}$
(\texttt{train/\allowbreak reward\_std}), the collapsed-prompt fraction
(\texttt{train/\allowbreak frac\_\allowbreak reward\_\allowbreak zero\_\allowbreak std}), the token-level policy entropy
(\texttt{train/entropy}), the KL to the reference
(\texttt{train/kl}), the gradient norm (\texttt{train/\allowbreak grad\_norm}), and
the maximum importance-sampling ratio
(\texttt{train/\allowbreak sampling/\allowbreak importance\_sampling\_\allowbreak ratio/\allowbreak max}). These are the
empirical witnesses for, respectively, the noise temperature of
Lemma~\ref{lem:noise}, the advantage degeneracy that breaks Eq.~\eqref{eq:adv},
the scalar-reduction premise of Assumption~\ref{ass:mf}, and the off-policy lag
and stability threshold of Lemma~\ref{lem:lag} and Proposition~\ref{prop:stab}.\par}

\subsection{Evaluation Protocol}
\label{subsec:exp_eval}

We evaluate along two axes. \emph{In-distribution}, we track the held-out GSM8K
reward and accuracy logged every $100$ steps, which provide the eval-side
trajectories used both for the three-phase fit and for the reward-hacking
(train--eval co-saturation) diagnostic. \emph{Out-of-distribution}, we measure
zero-shot transfer of the GSM8K-trained policies---without any further
tuning---to eight external math benchmarks spanning the full difficulty range,
from grade-school word problems to olympiad competition mathematics:
GSM-Plus \citep{li2024gsmplus}, MetaMathQA \citep{yu2024metamath},
OpenMath2 \citep{toshniwal2024openmath2}, NuminaMath \citep{numina_math_datasets},
MATH-500 \citep{hendrycks2021math, lightman2023letsverify},
IMO-Bench, AIME-2026, and HMMT-2025. The benchmarks are ordered, roughly, by
distance from the GSM8K training distribution, which lets us read transfer as a
function of that distance.

\paragraph{Metric.} Transfer is reported as pass@$k$ for $k\in\{1,5\}$, the
fraction of problems solved within $k$ sampled completions, as mean $\pm$
standard deviation over seeds. Reporting both $k=1$ and $k=5$ guards against
mistaking a temperature-induced broadening of the output distribution for a
genuine accuracy gain: an improvement that reflects real capability raises
pass@1 and not merely pass@5. For the larger benchmarks we subsample for
tractability---$2000$ examples each from MetaMathQA and OpenMath2---and for
NuminaMath we skip the single proof problem that is not objectively verifiable;
benchmark sizes are noted in Table~\ref{tab:gsm8k_benchmark}.

\subsection{Curve-Fitting Methodology}
\label{subsec:exp_fitting}

To test the closed-form prediction of Theorem~\ref{thm:main} we fit each
seed-averaged trajectory with the critically-damped three-phase law of
Proposition~\ref{prop:threephase},
\[
  R(t) \;=\; R_\infty - (R_\infty - R_0)\,(1 + \omega_n t)\,e^{-\omega_n t},
\]
a three-parameter family in the initial reward $R_0$, the plateau
$R_\infty=p^\star$, and the natural frequency $\omega_n=\sqrt{K_{\mathrm{eff}}/M_{\mathrm{eff}}}$.
Fits are obtained by nonlinear least squares against the seed-averaged curve, and
goodness of fit is reported as the in-panel coefficient of determination $R^2$
measured on that curve. The same fitted $(R_0,R_\infty)$ are reused to form the
scale- and offset-free progress variable
$u(t)=(R(t)-R_0)/(R_\infty-R_0)$ that underlies the doubly-normalized
co-saturation diagnostic. Because the deterministic coefficients are
$G$-independent by Proposition~\ref{prop:group}, a single functional form is fit
to both group sizes, and agreement of the fitted $R_\infty$ and shape across
$G\in\{4,16\}$ is itself one of the predictions under test.

\section{Results}
\label{sec:results}
\subsection{Description of GRPO Training Dynamics}
\label{subsec:gpro_training_dynamics}
We train three models---Nemotron-Mini-4B, DeepSeek-LLM-7B-Chat, and
DeepSeek-R1-Distill-Qwen-1.5B---with GRPO on GSM8K under two group sizes
$G \in \{4, 16\}$, using the configuration of Table~\ref{tab:gsm8k_config} in Appendix and
averaging over seeds $\{42, 3407\}$. For each run we log the training reward,
the held-out evaluation reward, and the evaluation accuracy as functions of the
optimizer step, and we fit each trajectory with the critically-damped
three-phase law of Proposition~\ref{prop:threephase},
\[
  R(t) \;=\; R_\infty - (R_\infty - R_0)\,(1 + \omega_n t)\,e^{-\omega_n t},
\]
a three-parameter family in $(R_0, R_\infty, \omega_n)$. Figure~\ref{fig:grpo_dynamics_gsm8k} collects the resulting curves with the in-panel coefficients of determination.

The central observation is that the \emph{training}-reward trajectories are
captured by the reduced-order model to high accuracy: across all three models and both group sizes the fit attains $R^2 \ge 0.91$ (panels (a)--(f)), reaching $R^2 = 0.97$ for the Nemotron and Qwen-1.5B runs. The three regimes predicted by Eq.~\ref{eq:collapse} are visible in every train panel: (i)~a slow start in which the reward changes little---the signature of the zero-velocity initial condition $R'(0)=0$ enforced by the momentum-plus-lag mass $M_{\mathrm{eff}}$ of Lemma~\ref{lem:lag}; (ii)~a rapid phase centered on the inflection $t^\star = 1/\omega_n$; and (iii)~a plateau at $R_\infty = p^\star$, the GRPO fixed point of Assumption~\ref{ass:mf}. The single-exponential saturation
law in Eq.~\ref{eq:emp}, being concave from $t=0$ with maximal initial slope, cannot represent the slow start; it is precisely the retained inertia that supplies the early inflection and lifts the fit to the observed quality.

Increasing $G$ from $4$ to $16$ leaves the fitted plateau $R_\infty$ and the qualitative slow-start/rapid-phase/plateau shape essentially unchanged while visibly narrowing the run-to-run scatter (the shaded $\pm$s.d.\ band contracts at $G=16$). This is the empirical content of the deterministic $G$-invariance of the coefficients Eq.~\ref{prop:group}: $G$ enters the dynamics only through the forcing intensity $D/G$, so it sets the stationary fluctuation$\langle e^2\rangle_\infty \propto 1/G$ rather than the mean trajectory. The two rollout budgets therefore trace the same mean curve at different noise levels, consistent with reports that two rollouts per prompt match sixteen up to noise.

The evaluation-side metrics are more nuanced. At $G=16$ the held-out reward is fit well (panels (j)--(l), $R^2$ from $0.92$ to $0.98$) and the evaluation accuracy improves substantially (panels (p)--(r), $R^2$ from $0.75$ to $0.95$), confirming that the train-time order parameter $p = \mathbb{E}[R]$ tracks the held-out quantities through saturation. At $G=4$, by contrast, the eval-reward and eval-accuracy fits degrade markedly (panels (g)--(i) and (m)--(n), $R^2 \approx 0.55$--$0.69$). We read this as a logging artifact rather than a breakdown of the dynamics: evaluation is recorded only every $100$ steps
(Table~\ref{tab:gsm8k_config} in Appendix), and at the smaller group size the seed-averaged eval curve is both shorter and noisier, leaving too few points to pin a three-parameter fit. The $G=16$ panels, which span more steps and average over more rollouts per step, recover the fit on the same quantities. The lone exception at $G=4$ is Qwen-1.5B, whose eval-accuracy fit is already high ($R^2 = 0.97$, panel (o))---the model that also shows the cleanest reward--eval coupling in our diagnostics.

Taken together, Figure~\ref{fig:grpo_dynamics_gsm8k} substantiates the empirical claim of Section~\ref{sec:formalism}: within the stable, mean-field regime, GRPO reward trajectories are not merely monotone but follow the specific closed-form shape of an overdamped-to-critically-damped second-order relaxation toward a fixed point, with the deterministic part of that shape independent of group size. 

\begin{figure}[!htbp]
    \centering    \includegraphics[width=\linewidth,height=0.65\textheight,keepaspectratio]{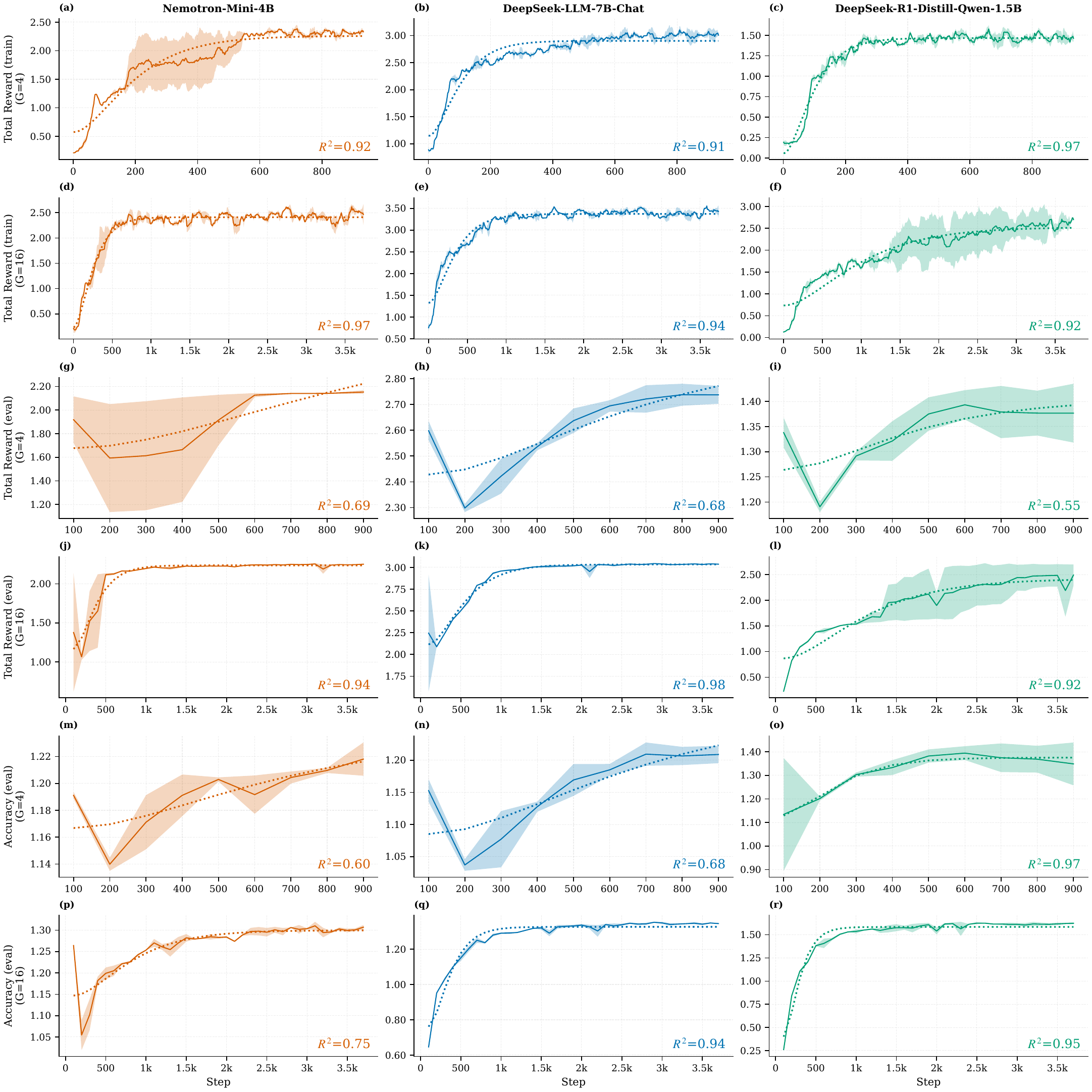}
    \caption{%
  GRPO training dynamics on GSM8K for three models (columns:
  Nemotron-Mini-4B, DeepSeek-LLM-7B-Chat, DeepSeek-R1-Distill-Qwen-1.5B)
  across two group sizes $G\in\{4,16\}$. Rows show total reward on train
  (rows 1--2) and eval (rows 3--4), and eval accuracy (rows 5--6), each
  pair organized as $G{=}4$ above $G{=}16$. Solid curves are seed-averaged
  means ($\pm$ s.d.\ shaded band) over seeds $\{42, 3407\}$; dotted curves
  are the critically-damped three-phase fit
  $R(t)=R_\infty-(R_\infty-R_0)(1+\omega_n t)\,e^{-\omega_n t}$ from
  Prop.~\ref{prop:threephase}, with the in-panel $R^2$ measured against
  the seed-averaged curve. Train-reward panels are fit by Eq.~\eqref{eq:threephase}
  to $R^2\ge 0.91$ across all models and group sizes; eval-side fits at
  $G{=}4$ degrade ($R^2\!\approx\!0.55$--$0.69$) because eval logging is
  sparse at the small group size, while the corresponding $G{=}16$ panels
  recover $R^2\ge 0.92$. Consistent with
  Proposition~\ref{prop:group}, the deterministic trajectory is
  approximately $G$-invariant: doubling $G$ from 4 to 16 leaves the
  fitted $R_\infty$ and the qualitative slow-start/rapid-phase/plateau
  shape unchanged, while reducing run-to-run scatter.%
  }
    \label{fig:grpo_dynamics_gsm8k}
\end{figure}

\subsection{Reward Hacking Investigation}

Reward hacking is the regime in which the optimizer continues to drive the
training reward toward its plateau while the held-out objective fails to follow:
the policy games features of the training signal that do not transfer. If the train and eval rewards both track the same scalar order parameter $p=\mathbb{E}[R]$ relaxing to
$R_\infty=p^\star$, then by the saturation law (Eq.~\ref{eq:sat}) each obeys
$-\log\!\bigl(R_\infty - R(t)\bigr) = \lambda t - \log(R_\infty - R_0)$, a line
whose slope is the shared relaxation rate $\lambda$. A healthy run is therefore
one in which train and eval \emph{co-saturate}---same shape, same rate---and
reward hacking is the failure of that co-saturation. Figure~\ref{fig:gsm8k-hacking}
makes this operational through two complementary read-outs.

\paragraph{Doubly-normalized plateau ratio (row~1).} For each metric we form the
scale- and offset-free progress variable
$u(t) = \bigl(R(t) - R_0\bigr)/\bigl(R_\infty - R_0\bigr)$ using the fitted
$(R_0, R_\infty)$ of Proposition~\ref{prop:threephase}, and plot $u_{\mathrm{eval}}(t)$
against $u_{\mathrm{train}}(t)$ parametrically (panels (a)--(c), star at the final
step). The normalization removes the reward scale and the initial offset, so a
run that co-saturates falls on the diagonal reference $y=x$, regardless of how
fast it trains or how high its plateau sits. A run whose eval progress lags its
train progress---reward rising on the training distribution faster than it
generalizes---bends \emph{below} the diagonal, and the vertical gap at a given
$u_{\mathrm{train}}$ is a direct, parameter-free measure of the hacking deficit.

\paragraph{Rolling coupling ratio (row~2).} The parametric view summarizes the
whole run but obscures \emph{when} decoupling sets in. We therefore estimate the
local rates by regressing $-\log\!\bigl(R_\infty - R(t)\bigr)$ on the step within
a rolling window (45\% of the eval-step span), separately for the train and eval
points, and report their ratio
\[
  \rho(t) = \frac{\lambda_{\mathrm{eval}}(t)}{\lvert\lambda_{\mathrm{train}}(t)\rvert}.
\]
Co-saturation is $\rho \approx 1$; the hacking signature is $\rho \to 0$, the
eval rate collapsing relative to the train rate. We plot $\rho$ on a
symmetric-log axis ($\mathrm{linthresh}=1$) against training progress normalized
per $G$, so the $G=4$ and $G=16$ traces overlay and the healthy band near unity
renders linearly.

\paragraph{Findings.} Three patterns emerge. (i)~At $G=4$, Nemotron-Mini-4B and
DeepSeek-LLM-7B-Chat both display the hacking signature in both read-outs: their
parametric curves (panels (a),(b)) sit well below the diagonal, and the coupling
ratio (panels (d),(e)) stays below one throughout training, so the held-out
reward never matches the pace of the training reward. (ii)~DeepSeek-R1-Distill-
Qwen-1.5B at $G=16$ (panel (f), purple) is the cleanest healthy run, with
$\rho \approx 1$ over the middle $80\%$ of training; its parametric curve
correspondingly hugs the diagonal. This is consistent with Section~\ref{subsec:gpro_training_dynamics},
where Qwen-1.5B was the one model whose eval-accuracy trajectory was already
well described by the reduced-order fit at $G=4$ (panel~(o) of
Figure~\ref{fig:grpo_dynamics_gsm8k}): the cleanest order-parameter coupling here and the
cleanest fit there are the same phenomenon. (iii)~The two read-outs agree on the
direction of the group-size effect---larger $G$ pushes runs toward the diagonal
and $\rho$ toward unity---so the rollout budget that Proposition~\ref{prop:group}
predicts to leave the \emph{mean} train trajectory unchanged nonetheless improves
the train--eval coupling, an effect outside the scalar reduction and attributable
to the reduced gradient noise at larger $G$.

One reading caveat accompanies the coupling ratio. Late in training, as
$R_{\mathrm{train}} \to R_\infty$, the train rate $\lambda_{\mathrm{train}} \to 0$
and $\rho$ becomes ill-defined; the resulting symlog spikes in panels (d) and (e)
are numerical artifacts of train saturation, not a new hacking signal, and the
$\mathrm{linthresh}=1$ axis is chosen precisely so that they are visible but
compressed. The informative portion of $\rho$ is therefore the interior of
training, where $\lambda_{\mathrm{train}}$ is bounded away from zero. Read this
way, Figure~\ref{fig:gsm8k-hacking} locates the training fraction at which the held-out
objective decouples from the training reward---complementing the diagnostics of
mode collapse and stability, which instead test where the
mean-field reduction itself, rather than its transfer, begins to fail.

\begin{figure}[!htbp]
    \centering    \includegraphics[width=\linewidth,height=0.65\textheight,keepaspectratio]{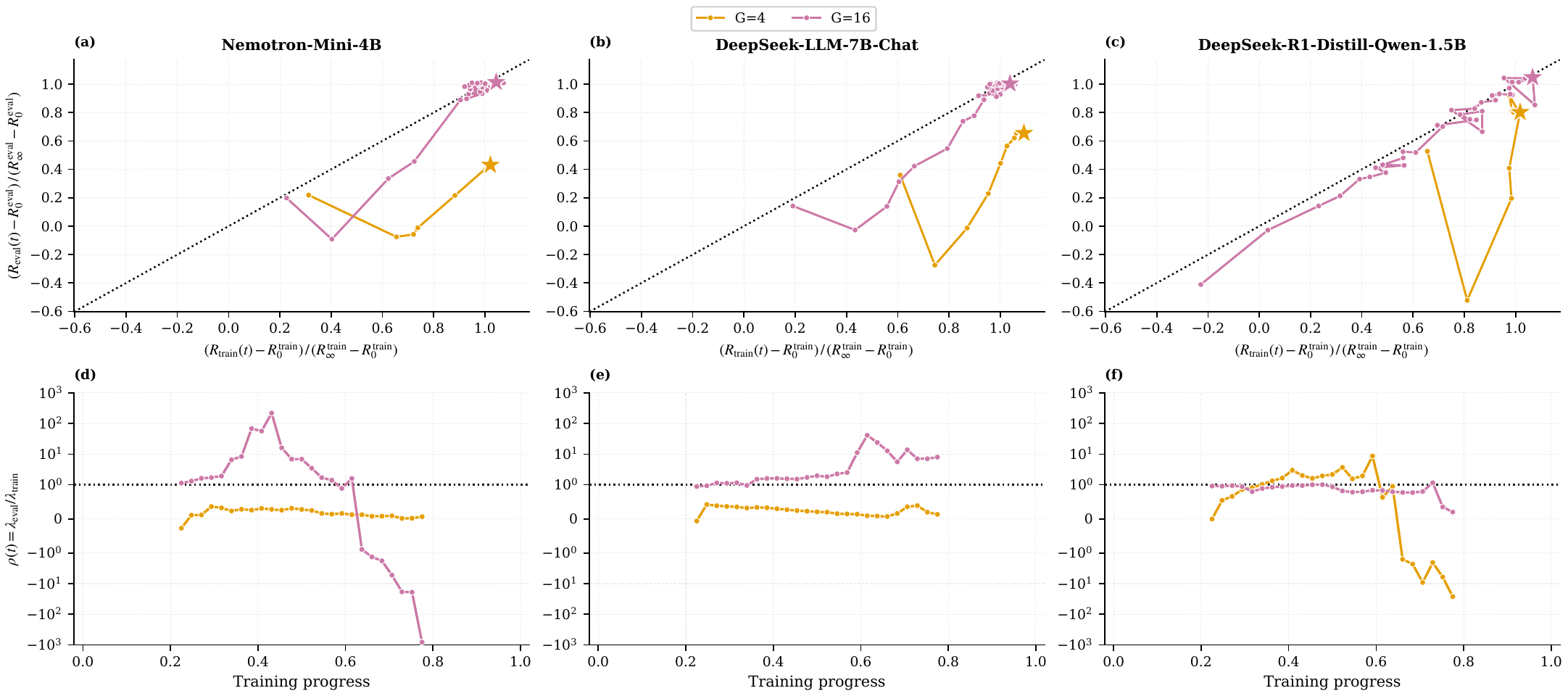}
    \caption{%
  Reward-hacking diagnostic for GRPO on GSM8K, three models (columns:
  Nemotron-Mini-4B, DeepSeek-LLM-7B-Chat, DeepSeek-R1-Distill-Qwen-1.5B),
  two group sizes ($G=4$ orange, $G=16$ purple).
  \textbf{Row 1, panels (a)--(c) -- doubly-normalized plateau ratio.}
  Each curve is the parametric trajectory of
  $\bigl(R_\text{train}(t)-R_0^\text{train}\bigr)/\bigl(R_\infty^\text{train}-R_0^\text{train}\bigr)$
  on the $x$-axis against the same normalization of $R_\text{eval}(t)$
  on the $y$-axis, with $R_0, R_\infty$ from the critically-damped fit
  of Prop.~\ref{prop:threephase} and a star marking the final training
  step. The dotted black line is the healthy reference $y=x$: a
  co-saturating run hugs the diagonal; a hacking run bends below it.
  \textbf{Row 2, panels (d)--(f) -- rolling coupling ratio
  $\rho(t) = \lambda_\text{eval}(t)/|\lambda_\text{train}(t)|$.}
  Local $\lambda$ in a rolling window (45\% of the eval-step span) is
  the slope of $-\log\bigl(R_\infty-R(t)\bigr)$ against step, regressed
  separately on train and eval points; $\rho \approx 1$ is healthy
  coupling, $\rho \to 0$ is the hacking signature
  (Sec.~\ref{sec:saturation}). The $x$-axis is normalized training
  progress so $G=4$ and $G=16$ overlay; the $y$-axis is symmetric-log
  with $\text{linthresh}=1$, so the $\rho \approx 1$ regime is linear
  and the late-training blow-ups (where $R_\text{train}\to R_\infty$
  makes $\lambda_\text{train}\to 0$ and $\rho$ ill-defined) are visible
  but visibly compressed.
  \textbf{Findings.} (i) $G=4$ shows the hacking signature for
  Nemotron and DeepSeek-7B in both rows: parametric curves in (a) and
  (b) sit well below the diagonal and $\rho$ in (d) and (e) stays
  below 1 throughout training. (ii) Qwen-1.5B at $G=16$ (panel (f),
  purple) is the cleanest healthy run: $\rho \approx 1$ over the
  middle 80\% of training. (iii) Late-training symlog spikes in (d)
  and (e) are numerical artifacts of train saturation, not new
  hacking signal.%
  }
  \label{fig:gsm8k-hacking}
\end{figure}

\subsection{Mode Collapse Investigation}

The reduced-order model treats the group size purely as a temperature: by
Lemma~\ref{lem:noise} the only role of $G$ is to set the forcing intensity
$D \propto \sigma_{\mathrm{adv}}^2/G$, and by Proposition~\ref{prop:group}
this forcing enters Eq.~\eqref{eq:floor} without touching the deterministic
coefficients. Mode collapse threatens this picture in two distinct ways, and
neither is a failure of the \emph{dynamics}---both are failures of the
\emph{construction} on which the dynamics rest. First, the within-group reward
spread $\sigma_{\mathrm{adv}}$ can itself drift or acquire a $G$-dependence that
Lemma~\ref{lem:noise} does not model, in which case the temperature
interpretation is no longer clean. Second, and more severely, the advantage
construction can degenerate: when all $G$ completions of a prompt receive the
same scalar reward, $\sigma_{\mathrm{adv}}=0$ and the GRPO advantage
$A_i=(r_i-\bar r)/\sigma_{\mathrm{adv}}$ of Eq.~\ref{eq:adv} becomes the
indeterminate $0/0$, so the prompt contributes no usable signal at all.
Figure~\ref{fig:gsm8k-collapse-diag} tests for each in turn, with training progress
normalized per $G$ so the two group sizes overlay on the $x$-axis.

\paragraph{Noise-intensity invariance test (row~1).} We plot the seed-averaged
within-group advantage variance $\sigma_{\mathrm{adv}}^2/G$ for both group sizes
(panels (a)--(c)), where $\sigma_{\mathrm{adv}}$ is the logged train-reward
standard deviation. This is precisely the forcing intensity of
Lemma~\ref{lem:noise}, so under the paper's assumptions the $G=4$ and $G=16$
traces should sit on a common, flat plateau: the temperature is the only thing
$G$ controls, and once it is divided out the curves coincide. Two departures are
diagnostic. A \emph{downward drift} away from the plateau means the gradient
signal in Eq.~\ref{eq:pg} is shrinking faster than the $1/G$ noise
reduction can account for---within-group rewards are losing their variability as
the policy concentrates mass on a shrinking set of completions. A
\emph{separation} of the two curves means $\sigma_{\mathrm{adv}}$ carries a
$G$-dependence of its own, again outside what Lemma~\ref{lem:noise} predicts. A
persistent, overlapping plateau is the healthy reading.

\paragraph{Collapsed-prompt fraction (row~2).} We plot the seed-averaged
fraction of prompts in a training step whose $G$ completions all received an
identical scalar reward (panels (d)--(f)). For these prompts the advantage is
the indeterminate $0/0$ of Eq.~\ref{eq:pg}, so they drop out of the
update entirely. A fraction near zero is healthy; a monotone climb toward one is
the empirical witness for the regime in which Lemma~\ref{lem:noise} and
Proposition~\ref{prop:group} cease to apply, not because the relaxation
of Theorem~\ref{thm:main} has been violated but because the object it relaxes
has stopped being well defined.

\paragraph{Findings.} The combined signature of a faithful run is a flat,
overlapping plateau in row~1 together with a collapsed-prompt fraction near zero
in row~2, and Figure~\ref{fig:gsm8k-collapse-diag} shows that the early portion of training sits in this regime across models: the advantage variance is on a stable
plateau and almost every prompt yields a usable advantage. Both diagnostics then
develop deviations as training proceeds. The row-1 traces do not collapse onto a
single curve to the extent the pure-temperature account requires---the residual
gap between $G=4$ and $G=16$ indicates a $G$-dependence of $\sigma_{\mathrm{adv}}$
beyond the $1/G$ scaling of Lemma~\ref{lem:noise}---and the collapsed-prompt
fraction in row~2 grows away from zero rather than holding, most visibly for
DeepSeek-LLM-7B-Chat, where a substantial share of prompts eventually carries no
gradient signal. These are concentration signatures: the policy is committing to
fewer completions per prompt, which simultaneously thins $\sigma_{\mathrm{adv}}$
(row~1) and pushes whole prompts into the $0/0$ degeneracy (row~2).

\paragraph{Scope.} The value of this diagnostic is that it separates two failure
modes that a reward curve alone cannot distinguish. A plateau in reward could
mean the order parameter has reached $p^\star$ (Theorem~\ref{thm:main}
holding) or that the advantage construction has degenerated; rows~1 and~2 tell these apart by inspecting the variability of the
signal rather than its level. Where row~1 drifts down or row~2 climbs, the
mean-field reduction of Assumption~\ref{ass:mf} is being eroded by mode
concentration---the policy collapsing onto a small set of completions---rather
than by any violation of the second-order relaxation itself. 

\begin{figure}[!htbp]
    \centering    \includegraphics[width=\linewidth,height=0.65\textheight,keepaspectratio]{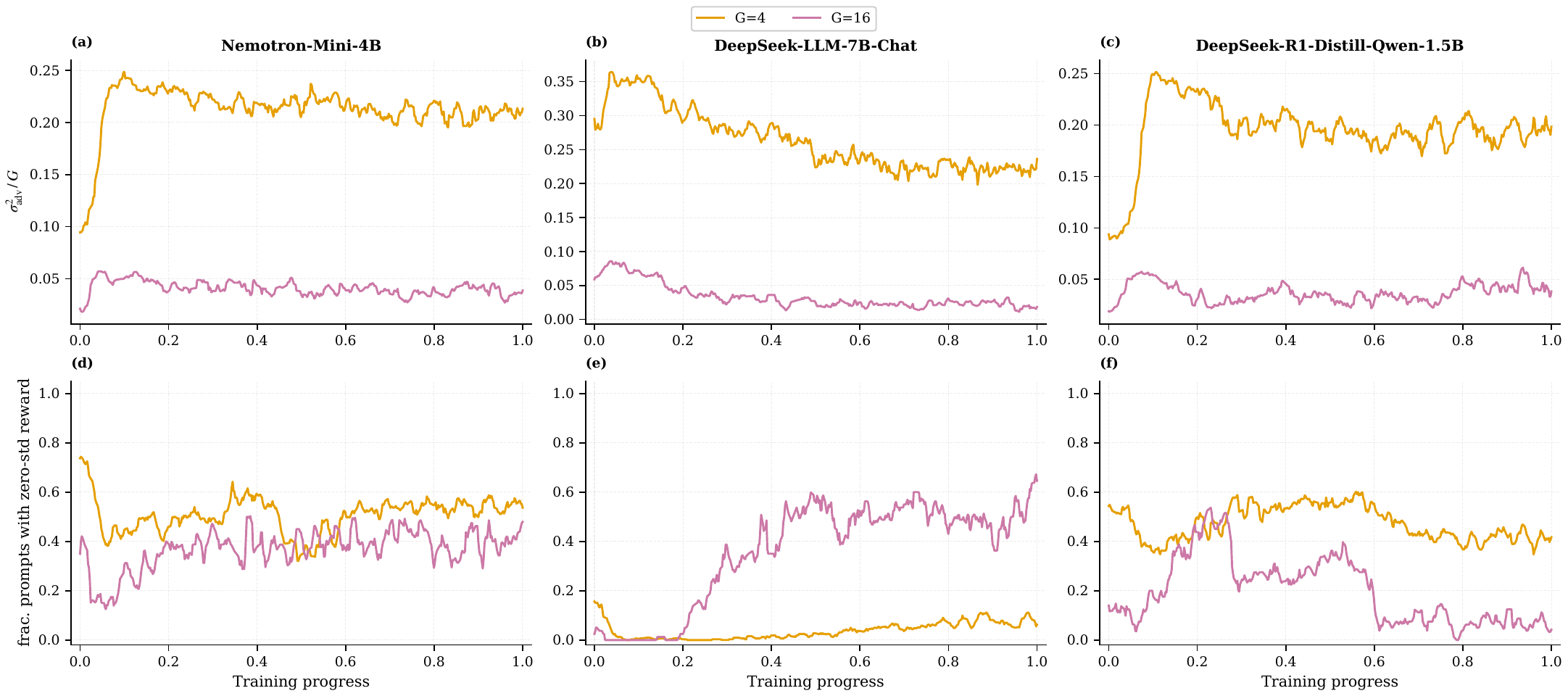}
   \caption{%
  Mode-collapse diagnostic for GRPO on GSM8K, three models (columns:
  Nemotron-Mini-4B, DeepSeek-LLM-7B-Chat, DeepSeek-R1-Distill-Qwen-1.5B),
  two group sizes ($G=4$ orange, $G=16$ purple). The $x$-axis is
  training progress normalized per $G$ so the two curves overlay.
  \textbf{Row 1, panels (a)--(c) -- noise-intensity invariance test.}
  Each curve is the seed-averaged within-group advantage variance
  $\sigma_\mathrm{adv}^2/G$, where $\sigma_\mathrm{adv}$ is the logged
  \texttt{train/reward\_std}. By Lemma~\ref{lem:noise} the GRPO noise
  intensity is $D\propto\sigma_\mathrm{adv}^2/G$; by
  Proposition~\ref{prop:group} $G$ enters Eq.~\eqref{eq:main} only
  through this forcing intensity, so under the paper's assumptions
  the $G=4$ and $G=16$ curves should track each other on a common
  plateau. Departure from that plateau (downward drift) indicates
  the gradient signal in Eq.~\eqref{eq:adv} is shrinking faster than
  the $1/G$ noise reduction, i.e., the policy is concentrating mass
  and within-group rewards lose variability. Separation of the two
  curves indicates a $G$-dependence of $\sigma_\mathrm{adv}$ that
  Lemma~\ref{lem:noise} does not predict.
  \textbf{Row 2, panels (d)--(f) -- collapsed-prompt fraction.}
  Seed-averaged \texttt{train/frac\_reward\_zero\_std}: the fraction
  of prompts in a training step whose $G$ completions all received
  the same scalar reward, so the GRPO advantage
  $A_i=(r_i-\bar r)/\sigma_\mathrm{adv}$ in Eq.~\eqref{eq:adv} is the
  indeterminate $0/0$. Zero is healthy; a monotone climb toward one
  is the empirical witness for the regime where Lemma~\ref{lem:noise}
  and Proposition~\ref{prop:group} cease to apply because the
  advantage construction itself has degenerated.
  \textbf{Reading.} Persistent flat overlap of the two $G$ curves in
  row~1, combined with row~2 near zero, is the signature of a
  training run where the reduced-order dynamics of
  Theorem~\ref{thm:main} remain a faithful description. A drop in
  row~1 or a rise in row~2 marks the training step at which the
  mean-field reduction (Assumption~\ref{ass:mf}) is being eroded by
  mode concentration rather than by a violation of the dynamics.%
  }
  \label{fig:gsm8k-collapse-diag}
\end{figure}

\subsection{Policy Concentration Investigation}

Of all the steps behind Theorem~\ref{thm:main}, only Assumption~\ref{ass:mf}
is non-mechanical (Remark~\ref{rem:scope}): it posits that the entire policy is
summarized to leading order by the scalar order parameter $p(\theta)=\mathbb{E}[R]$.
This is a statement about the \emph{dimensionality} of the dynamics, not about
the reward level. It can fail even while the reward evolves smoothly, if the
policy quietly collapses its mass onto a small set of completions: $p(\theta)$
continues along a one-dimensional path while the underlying high-dimensional
$\theta$-dynamics are no longer captured by it. This policy concentration is the
direct threat to Assumption~\ref{ass:mf}, and it is distinct from the
advantage degeneracy---there the construction in
Eq.~\ref{eq:obj} broke down, here the reduction's central premise does.
Figure~\ref{fig:gsm8k-concentration} probes it through two observables: the token-level
policy entropy, which measures how spread the policy remains, and the KL anchor,
which measures how far it has travelled from the reference.

\paragraph{Token-level policy entropy (row~1).} We plot the seed-averaged
per-token entropy of $\pi_\theta(\cdot\mid q)$ averaged across the batch (panels
(a)--(c)). A high, stable entropy is the regime in which the scalar reduction is
defensible: the policy retains its breadth, so summarizing it by $p(\theta)$
discards little. A monotone decrease is the empirical signature that the
reduction is becoming a poor description of the $\theta$-dynamics, since mass is
concentrating on a few completions while $p(\theta)$ may still be moving
smoothly; a rapid drop to a low plateau is the concentration signal proper.
Because Proposition~\ref{prop:group} makes the deterministic dynamics
$G$-invariant, the entropy trajectories should additionally \emph{overlap}
across group sizes; curves that do not overlap indicate a $G$-driven
concentration that the reduction does not predict.

\paragraph{KL anchor (row~2).} We plot the seed-averaged
$D_{\mathrm{KL}}(\pi_\theta \,\|\, \pi_{\mathrm{ref}})$ (panels (d)--(f)), the
term carrying weight $\beta$ in the objective Eq.~\ref{eq:obj} and supplying
the restoring force of Lemma~\ref{lem:backbone}. The KL alone is ambiguous, and
is informative only when read against row~1. Rising KL \emph{with} falling
entropy is the concentration regime---the policy has both moved far from the
reference and lost breadth, the mass loss being what reduced the entropy. Rising
KL with \emph{stable} entropy is the benign case: a pure policy shift, a
translation in policy space that the order-parameter description can absorb
without the scalar summary failing.

\paragraph{Findings.} Every run shows entropy declining and KL rising, which is
expected as GRPO pulls the policy away from its initialization; the question is
whether the joint motion is the benign shift or the concentration failure, and
the three models separate cleanly on this axis. DeepSeek-LLM-7B-Chat shows the
strongest concentration signature: its KL rises to the largest values of the
three models and its entropy falls correspondingly, the falling-entropy/rising-KL
combination that row~2 is designed to flag. Qwen-1.5B sits at the opposite
extreme---its KL stays small, so the policy remains close to the reference and
its entropy declines least---marking it as the run for which
Assumption~\ref{ass:mf} remains most faithful. Nemotron-Mini-4B is
intermediate. This ordering is the same one recovered by the earlier diagnostics:
Qwen-1.5B was the model with the cleanest order-parameter fit and the cleanest
train--eval coupling, while DeepSeek-7B was the model whose advantage construction degenerated most. The entropy trajectories are moreover not perfectly $G$-invariant---the $G=4$ and $G=16$ curves do not coincide as cleanly as Proposition~\ref{prop:group} would require---which is the same $G$-dependence of the within-group statistics seen earlier, now visible at the level of the policy itself.

\paragraph{Scope.} The decisive quantity is the \emph{timing} of the entropy drop
relative to reward saturation. An entropy decline that occurs only after the
reward has reached its plateau (Figure~\ref{fig:grpo_dynamics_gsm8k}) is comparatively
harmless: by then $p \approx p^\star$ and the reduction has already done its
work. An entropy drop that precedes saturation is the dangerous case, because the
reward---and hence $p(\theta)$---is still evolving at a point where the policy
has already lost its dimensionality, so Theorem~\ref{thm:main} should no
longer be expected to describe the trajectory. A nearly flat entropy curve with a
slowly rising KL therefore certifies a run in which the scalar reduction holds
throughout, whereas a sharp pre-saturation entropy collapse locates the training
fraction at which it ceases to. Together with the stability diagnostic, which tests the complementary failure mode in which
the dynamics themselves go unstable, this completes the account of where the
closed-form description of Section~\ref{sec:formalism} can and cannot be trusted.

\begin{figure}[!htbp]
\centering
\includegraphics[
  width=\linewidth,
  height=0.60\textheight,
  keepaspectratio
]{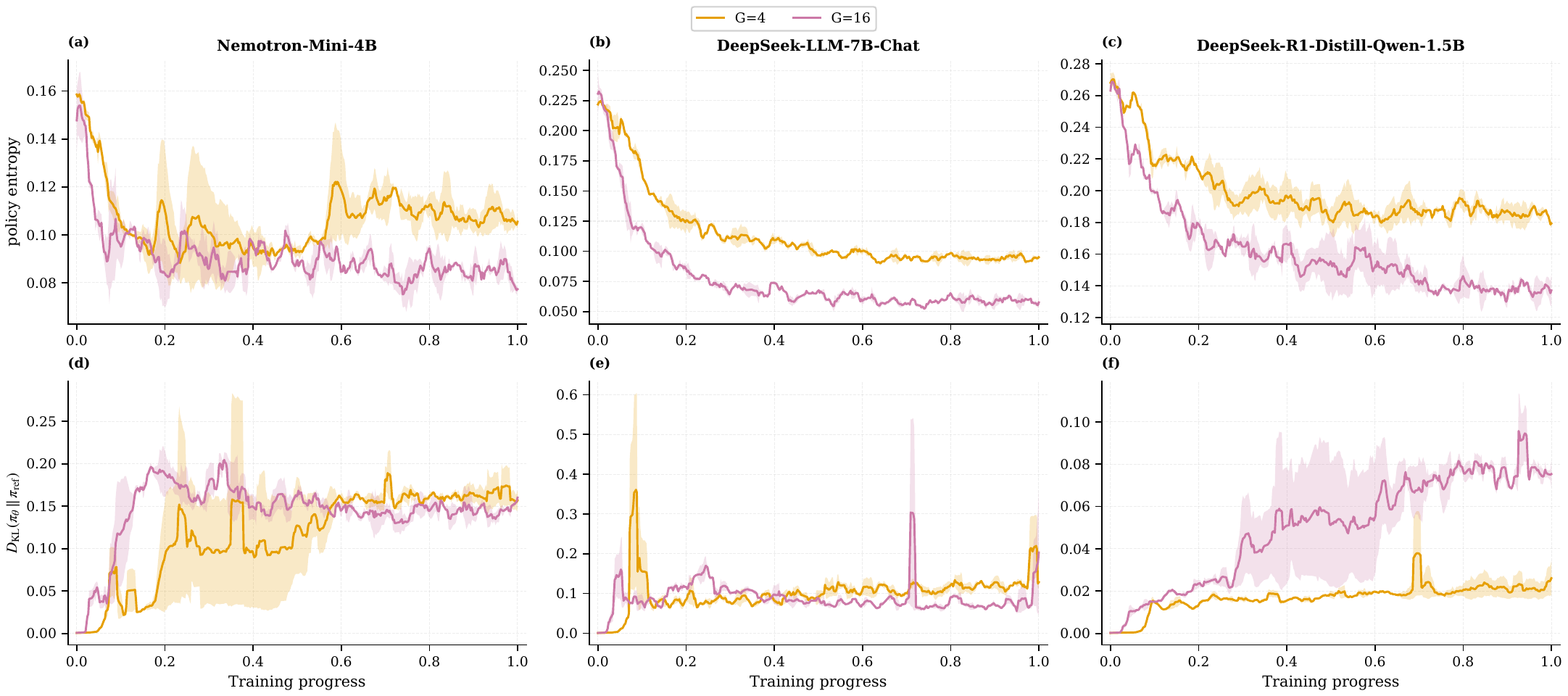}
\caption{Policy-concentration diagnostics for GRPO on GSM8K. Columns correspond
to Nemotron-Mini-4B, DeepSeek-LLM-7B-Chat, and DeepSeek-R1-Distill-Qwen-1.5B.
Curves show two group sizes ($G=4$, orange; $G=16$, purple). The $x$-axis is
training progress normalized by group size. Solid lines denote seed-averaged
means over seeds $\{42,3407\}$ and shaded regions indicate $\pm$ one standard
deviation. \textbf{Top row (a)--(c):} Token-level policy entropy
(\texttt{train/entropy}), averaged across generated tokens and prompts. Declining
entropy indicates increasing concentration of probability mass on a smaller set
of completions. \textbf{Bottom row (d)--(f):} KL divergence from the reference
policy, \texttt{train/kl} $= D_{\mathrm{KL}}(\pi_\theta \Vert \pi_{\mathrm{ref}})$.
Rising KL together with falling entropy indicates policy concentration, whereas
rising KL with approximately stable entropy corresponds to a broader policy
shift. Together, these diagnostics assess whether the scalar order-parameter
assumption remains a faithful description of the underlying policy dynamics during
training.}
\label{fig:gsm8k-concentration}
\end{figure}

\subsection{Training Stability Investigation}

Theorem~\ref{thm:main} can fail in two ways, first: the reduction's premise eroding while the dynamics themselves
remain well behaved. This section tests the complementary failure—the dynamics becoming unstable while the reduction's premise still holds. The linearized system Eq.~\ref{eq:main} is unstable iff $K\eta K_{\mathrm{ref}} > 1-\mu$, at which point the effective damping
$C_{\mathrm{eff}}$ turns negative and the characteristic
roots cross into the right half-plane. Figure~\ref{fig:gsm8k-stability} tracks this
through two observables that stand in a cause-and-effect relation: the importance-sampling ratio, which witnesses the off-policy lag that erodes the damping, and the gradient norm, which is where a loss of damping shows up as divergence.

\paragraph{Gradient-norm trajectory (row~1).} We plot the seed-averaged $\|\nabla J\|$ on a logarithmic $y$-axis (panels (a)--(c)). When
Proposition~\ref{prop:stab} is violated and $C_{\mathrm{eff}}<0$, the
right-half-plane roots produce monotone exponential growth---a straight ascending
line on a log scale---or abrupt spikes as the gradient blows up. A bounded,
weakly oscillating trace is the converse signature: the roots remain in the left
half-plane and the linearized model of Theorem~\ref{thm:main} continues to
apply.

\paragraph{Importance-sampling drift (row~2).} We plot the seed-averaged maximum
over the batch of $\pi_\theta(o\mid q)/\pi_{\mathrm{old}}(o\mid q)$ (panels
(d)--(f)), where $\pi_{\mathrm{old}}=\pi_{\theta_{k-K_{\mathrm{ref}}}}$ is the
refreshed sampling policy of Eq.~\eqref{eq:update}. This is the direct witness of
the off-policy lag of Lemma~\ref{lem:lag}: as the delay $\tau = K_{\mathrm{ref}}\delta$
grows, $\pi_{\mathrm{old}}$ and $\pi_\theta$ separate, the support of the ratio
widens, and the $-K\tau$ damping reduction in $C_{\mathrm{eff}}=c_\mu - K\tau$
becomes material. A blow-up of the IS-ratio max is therefore the practical
precursor of crossing the threshold~\eqref{eq:threshold}: row~2 supplies the
mechanism, row~1 the consequence, so widening in row~2 should lead any divergence
in row~1.

\paragraph{Findings.} Across all three models and both group sizes the training
sits squarely in the stable regime. The gradient norms (row~1) remain bounded and
weakly oscillating on the log scale, with no monotone exponential growth and no
runaway spikes---the signature of left-half-plane roots throughout training. The
IS-ratio maxima (row~2) settle onto a bounded plateau, fluctuating in a narrow
band (roughly $2.3$--$2.9$ across the three models) rather than diverging. The
off-policy lag is thus clearly present---the max ratio sits well above one, so
$\pi_{\mathrm{old}}$ and $\pi_\theta$ do separate between refreshes---but its
damping cost $-K\tau$ is finite and never large enough to drive $C_{\mathrm{eff}}$
negative, so the stable condition $K\eta K_{\mathrm{ref}} < 1-\mu$ holds for the
duration of every run. Mechanistically this is unsurprising: the small learning
rate in Table~\ref{tab:gsm8k_config} in Appendix keeps the product $K\eta K_{\mathrm{ref}}$ well
below $1-\mu$, which is precisely the overdamped, small-$\eta$ regime that
Proposition~\ref{prop:zeta} and Remark~\ref{rem:corr} identify
with monotone saturation. Both diagnostics also overlay across group sizes,
consistent with the $G$-freeness of the deterministic coefficients in
Proposition~\ref{prop:group}.

\begin{figure}[!htbp]
    \centering    \includegraphics[width=\linewidth,height=0.65\textheight,keepaspectratio]{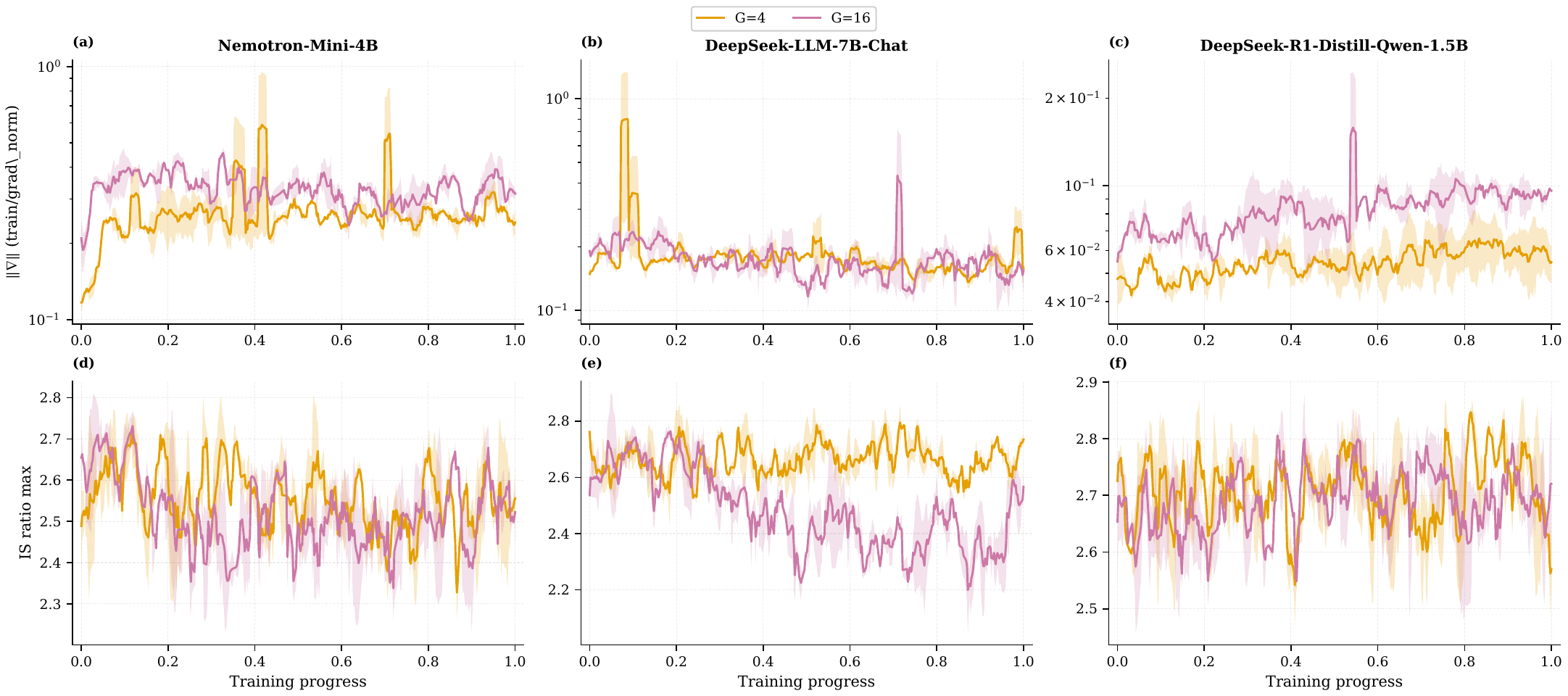}
\caption{%
  Stability diagnostic for GRPO on GSM8K, three models (columns:
  Nemotron-Mini-4B, DeepSeek-LLM-7B-Chat,
  DeepSeek-R1-Distill-Qwen-1.5B), two group sizes ($G=4$ orange,
  $G=16$ purple). The $x$-axis is training progress normalized per
  $G$ so the two curves overlay. Solid lines are seed-averaged means
  over seeds $\{42, 3407\}$ with shaded $\pm$ s.d.\ bands.
  \textbf{Row 1, panels (a)--(c) -- gradient-norm trajectory.}
  Seed-averaged \texttt{train/grad\_norm} on a logarithmic $y$-axis.
  Proposition~\ref{prop:stab} predicts that the linearized dynamics
  \eqref{eq:main} become unstable iff
  $K\,\eta\,K_\mathrm{ref}>1-\mu$, in which case the effective
  damping $C_\mathrm{eff}$ of \eqref{eq:final} becomes negative and
  the characteristic roots cross into the right half-plane. The
  empirical witness is monotone exponential growth (a straight
  ascending line on a log scale) or abrupt spikes in
  $\|\nabla J\|$; a bounded, weakly oscillating trace is the
  stable-regime signature.
  \textbf{Row 2, panels (d)--(f) -- importance-sampling drift.}
  Seed-averaged
  \texttt{train/sampling/importance\_sampling\_ratio/max}: the
  maximum across the batch of $\pi_\theta(o\mid q)/\pi_\mathrm{old}(o\mid q)$,
  where $\pi_\mathrm{old}=\pi_{\theta_{k-K_\mathrm{ref}}}$ is the
  refreshed sampling policy of \eqref{eq:update}. This is the direct
  witness of the off-policy lag analysed in Lemma~\ref{lem:lag}:
  as $\tau=K_\mathrm{ref}\delta$ grows, $\pi_\mathrm{old}$ and
  $\pi_\theta$ separate, the IS-ratio support widens, and the
  $-K\tau$ damping reduction in $C_\mathrm{eff}=c_\mu-K\tau$ becomes
  material. The IS-ratio max blowing up is the practical sign that
  the system is approaching the threshold \eqref{eq:threshold}.
  \textbf{Reading.} A bounded \texttt{grad\_norm} in row~1 together
  with an IS-ratio max staying near unity in row~2 indicates the
  training is in the stable regime
  $K\,\eta\,K_\mathrm{ref}<1-\mu$, consistent with the overdamped
  saturation observed in Figure~\ref{fig:grpo_dynamics_gsm8k}. Divergence
  of either signal locates the training fraction at which the
  linearized model of Theorem~\ref{thm:main} becomes inapplicable
  because the system has crossed the stability threshold of
  Proposition~\ref{prop:stab}.%
  }
  \label{fig:gsm8k-stability}
\end{figure}

\subsection{Exercising the Oscillatory Regime in an Exact Reduction}
\label{sec:oscillatory}

The experiments of the preceding subsections all lie deep in the overdamped
regime that Proposition~\ref{prop:zeta} identifies with the small-$\eta$
configuration of Table~\ref{tab:gsm8k_config}, and so they leave untested
precisely the two predictions that separate a second-order model from the
first-order saturation laws it generalizes: the stability threshold
$K\eta K_{\mathrm{ref}}>1-\mu$ of Proposition~\ref{prop:stab} and the
overdamped-to-oscillatory transition of Section~\ref{sec:saturation}. We exercise
both here. To isolate the predicted physics from the deep-network approximation,
we run the reduction in the setting where Assumption~\ref{ass:mf} holds
\emph{exactly}: a softmax policy over a finite completion set, for which the order
parameter $p(\theta)=\mathbb{E}_{o\sim\pi_\theta}[R]$ closes on itself and the
gradient-flow projection of \eqref{eq:update} is exact rather than approximate.
Any oscillation observed here is therefore attributable to the inertial term of
Theorem~\ref{thm:main} and not to a breakdown of the mean-field premise.

\paragraph{Setup.} We take a single prompt with $|\mathcal{O}|=8$ completions
carrying fixed verifiable rewards, a uniform reference policy $\pi_{\mathrm{ref}}$,
and initialization $\theta_0$ at the uniform policy, so the trajectory starts
from rest ($\dot p(0)=0$) at $R_0=p_{\mathrm{ref}}$. We evolve the exact
heavy-ball GRPO map \eqref{eq:update} with a genuine stale-policy refresh
$\theta_{\mathrm{old}}=\theta_{k-K_{\mathrm{ref}}}$, computing the group-relative
advantage \eqref{eq:adv} under $\pi_{\theta_{\mathrm{old}}}$ in closed form (the
$G\!\to\!\infty$ deterministic limit, in which the importance ratio of
\eqref{eq:obj} cancels the sampling measure); the $1/G$ forcing of
Lemma~\ref{lem:noise} is suppressed so that the deterministic dynamics are seen
without stationary fluctuation. Crucially, the stiffness $K$ is \emph{measured}
once, from an independent stable baseline ($\mu=0$, $K_{\mathrm{ref}}=1$, small
$\eta$), by reading the early-training relaxation rate $\lambda$ off
$-\log\!\big(p^\star-p(t)\big)$ and inverting the overdamped slow-pole relation of
Corollary~\ref{cor:sat}; the recovered $p^\star=1.35$ and $K=0.12$ are then held
fixed and used only to \emph{predict} the critical refresh interval
$K_{\mathrm{ref}}^\star=(1-\mu)/(K\eta)$. The analytic curves overlaid below carry
no per-trajectory fit.

\paragraph{The transition appears as predicted (Figure~\ref{fig:sweep}).}
Fixing $(\eta,\mu,\beta)=(0.30,0.80,1.00)$ and sweeping $K_{\mathrm{ref}}$, the
exact order parameter traces the full cascade Theorem~\ref{thm:main} prescribes.
At small $K_{\mathrm{ref}}$ the response is monotone, a single overshoot-free
approach to $p^\star$; as $K_{\mathrm{ref}}$ grows the off-policy lag of
Lemma~\ref{lem:lag} erodes the effective damping
$C_{\mathrm{eff}}=c_\mu-K\tau$, the damping ratio \eqref{eq:zeta} drops through
unity, and the trajectory develops first an overshoot and then a sequence of
damped oscillations about $p^\star$. The inertia responsible for the ringing is
the same momentum-plus-lag mass $M_{\mathrm{eff}}$ of Lemma~\ref{lem:lag} that
supplies the slow-start inflection of Proposition~\ref{prop:threephase}: the two
phenomena are the overdamped and underdamped faces of one second-order system,
and a first-order saturation law \eqref{eq:emp} can represent neither.

\paragraph{The boundary tracks the prediction where the linearization holds
(Figure~\ref{fig:phase}).} Classifying the exact dynamics over an
$(\eta,K_{\mathrm{ref}})$ grid into monotone, overshooting, and oscillatory
regimes, the empirical monotone--oscillatory frontier follows the hyperbolic
locus $K\eta K_{\mathrm{ref}}=\text{const}$ that Proposition~\ref{prop:stab}
predicts. In the small-$\eta$, large-$K_{\mathrm{ref}}$ corner---where the local
linearization of Remark~\ref{rem:scope} is most trustworthy---the measured
frontier sits essentially on the predicted threshold line
$K\eta K_{\mathrm{ref}}=1-\mu$, drawn with the independently measured $K$ and no
free parameter. This is the central positive result: the refresh-interval
threshold of Proposition~\ref{prop:stab} is not merely an artifact of the
derivation but a property of the exact dynamics, pinned in absolute terms by a
slope measured elsewhere.

\paragraph{Two qualifications, both informative.} First, the $\zeta=1$ onset
curve obtained from the $O(\tau^2)$-truncated coefficients \eqref{eq:final} sits
\emph{below} the empirical onset: the truncated model calls for oscillation at
smaller $\eta$ than the exact system actually exhibits, so the second-order
expansion is conservative about where ringing begins even as it captures the
threshold's functional form. The gap narrows toward large $K_{\mathrm{ref}}$ and
widens at large $\eta$, exactly where a low-order expansion of the delay
$e(t-\tau)$ should be expected to degrade. Second, the exact trajectories never
diverge: the linear instability of Proposition~\ref{prop:stab}, in which the
characteristic roots cross into the right half-plane, manifests in the nonlinear
system as \emph{persistent bounded ringing} rather than as $p\to\infty$, because
the order parameter is confined to $[\min_o R,\max_o R]$ and the softmax
saturates. The unbounded blow-up is therefore a property of the linearization
alone; Proposition~\ref{prop:stab} correctly locates the loss of monotone
stability but overstates its consequence, the oscillation amplitude being
regularized by the same nonlinearity that Assumption~\ref{ass:mf} linearizes
away.

\paragraph{Scope.} This study does for the dynamical predictions what the
mode-collapse and policy-concentration diagnostics do for the mean-field premise:
it identifies the regime in which the closed form can be trusted and the manner
in which it first fails. The overdamped-to-oscillatory transition and the
Proposition~\ref{prop:stab} threshold are confirmed in the setting where
Assumption~\ref{ass:mf} is exact, locating the truncation-induced error in the
onset rather than in the threshold and showing the predicted divergence to be a
nonlinearly-regularized ringing. What remains is the deep-network demonstration:
repeating the $K_{\mathrm{ref}}$ sweep (or, equivalently, raising $\eta$) on the
models of Section~\ref{subsec:gpro_training_dynamics} and reading the oscillation
off the reward curve while the importance-sampling drift and gradient norm of
Figure~\ref{fig:gsm8k-stability} witness the crossing. There
Assumption~\ref{ass:mf} is only approximate, so the exact-reduction boundary
established here is the appropriate reference against which to measure the
deviation.

\begin{figure}[!htbp]
  \centering
  \includegraphics[width=\linewidth]{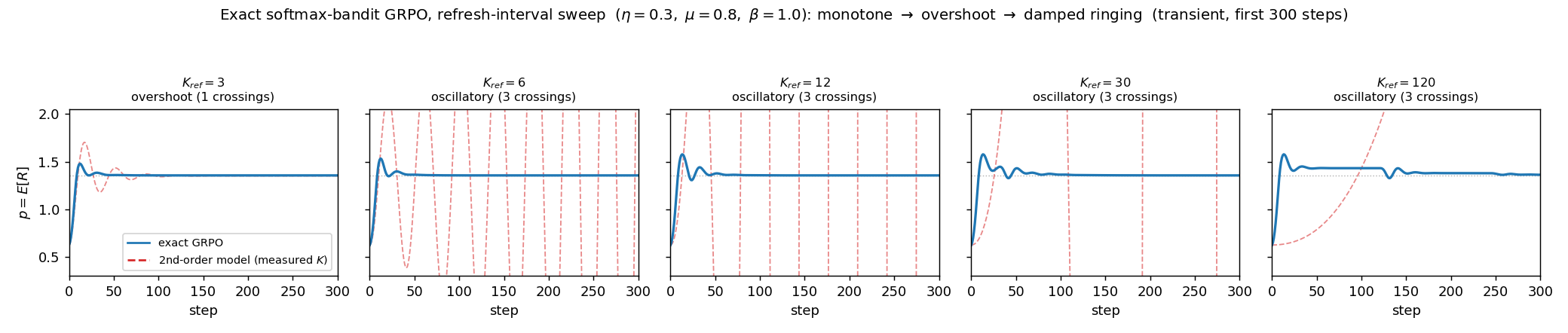}
  \caption{\textbf{Overdamped-to-oscillatory transition in the exact softmax
  reduction.} Exact GRPO order parameter $p(t)=\mathbb{E}_{\pi_\theta}[R]$
  (solid) under the heavy-ball map \eqref{eq:update} with a stale-policy refresh
  $\theta_{\mathrm{old}}=\theta_{k-K_{\mathrm{ref}}}$, at fixed
  $(\eta,\mu,\beta)=(0.30,0.80,1.00)$ and increasing refresh interval
  $K_{\mathrm{ref}}$ (transient, first $300$ steps; dotted line marks
  $p^\star$). As $K_{\mathrm{ref}}$ grows the off-policy lag of
  Lemma~\ref{lem:lag} erodes the effective damping \eqref{eq:zeta} and the
  response passes from a monotone approach, through a single overshoot, to a
  sequence of damped oscillations about $p^\star$. Dashed curves are the
  second-order prediction of Theorem~\ref{thm:main} evaluated with the
  \emph{independently measured} stiffness $K$ (no per-trajectory fit); in this
  regime the $O(\tau^2)$-truncated coefficients place the system past $\zeta=0$,
  over-stating the amplitude that the bounded nonlinear dynamics actually
  realize.}
  \label{fig:sweep}
\end{figure}

\begin{figure}[!htbp]
  \centering
  \includegraphics[width=0.72\linewidth]{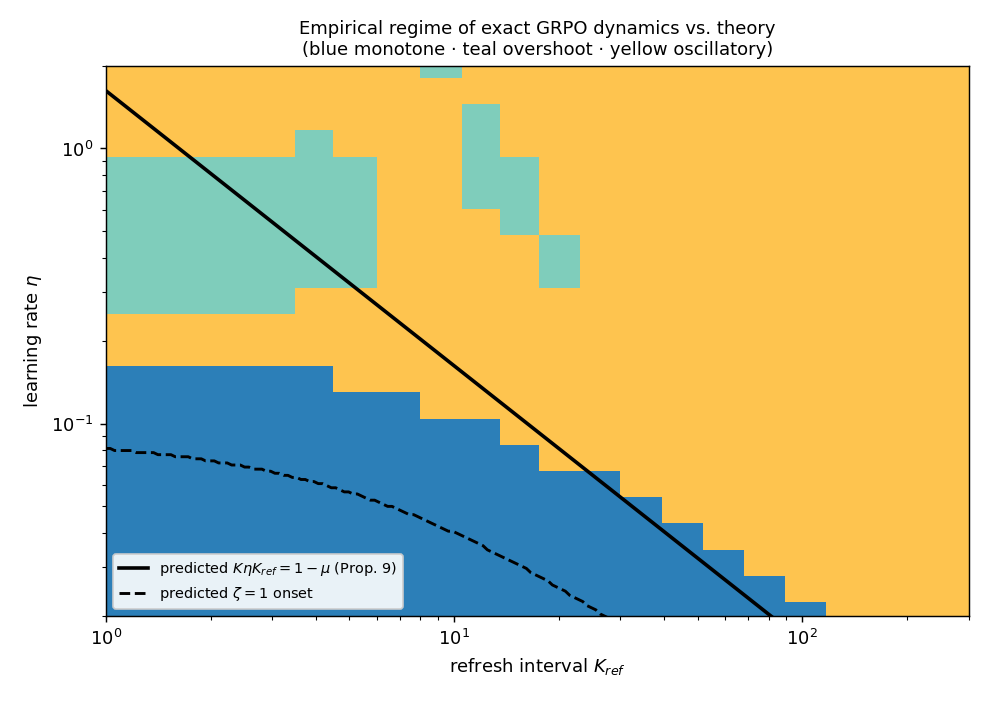}
  \caption{\textbf{Empirical regime of the exact dynamics versus the
  Proposition~\ref{prop:stab} threshold.} Each cell classifies the exact
  softmax-bandit trajectory at $(\eta,K_{\mathrm{ref}})$, $\mu=0.80$,
  $\beta=1.00$, as monotone (blue), overshooting (teal), or oscillatory
  (yellow). The solid line is the predicted stability threshold
  $K\eta K_{\mathrm{ref}}=1-\mu$ of Proposition~\ref{prop:stab}, drawn with the
  independently measured $K$; the dashed line is the $\zeta=1$ onset implied by
  the $O(\tau^2)$ coefficients \eqref{eq:final}. In the small-$\eta$,
  large-$K_{\mathrm{ref}}$ corner, where the linearization of
  Remark~\ref{rem:scope} is most accurate, the empirical monotone--oscillatory
  frontier coincides with the predicted threshold. The truncated $\zeta=1$ onset
  lies below the empirical onset, quantifying the conservatism of the
  second-order delay expansion.}
  \label{fig:phase}
\end{figure}


\subsection{Out-of-Distribution Benchmarking}

Table~\ref{tab:gsm8k_benchmark} reports out-of-distribution transfer: all three
models are trained with GRPO on GSM8K and evaluated, without further tuning, on
eight math benchmarks spanning grade-school word problems (GSM-Plus, MetaMathQA)
through competition mathematics (AIME-2026, HMMT-2025, IMO-Bench). Entries are
pass@$k$ means $\pm$ standard deviation over seeds, with green marking
improvements over the untrained base, bold-green the best configuration for each
model, and red the rare regressions. The headline result is that GRPO on a
single grade-school dataset transfers broadly and positively: the large majority
of cells are green for both group sizes and all three models, and pass@5 exceeds
pass@1 throughout, so the gains reflect genuine accuracy rather than a shift in
sampling temperature.

The size of the transfer is strongly graded by the distance between the training
and evaluation distributions. Gains are largest on benchmarks closest to GSM8K:
Nemotron-4B more than doubles its GSM-Plus pass@1 (9.11 to 20.32 at $G=16$), and
DeepSeek-7B improves GSM-Plus pass@1 by roughly fifteen points (25.64 to 40.25 at
$G=4$). They shrink markedly on the hardest competition sets, where the two
non-reasoning models (DeepSeek-7B, Nemotron-4B) remain at or near zero on
AIME-2026 and HMMT-2025 regardless of training, and only the reasoning-distilled
DeepSeek-R1-Qwen-1.5B shows movement there---AIME-2026 pass@1 rising from 1.33 to
6.17 and HMMT-2025 pass@5 from 10.00 to 16.67 at $G=16$. GRPO on in-distribution
data thus amplifies capabilities the base model already possesses far more than
it confers new ones on benchmarks well outside the training distribution.

The two group sizes are broadly comparable. $G=16$ secures most of the
per-model best results, but $G=4$ is competitive across the board and is the
stronger setting in several cases---most strikingly DeepSeek-7B on GSM-Plus
(40.25 versus 30.83 pass@1) and Qwen-1.5B on HMMT-2025 pass@1 (7.33 versus
6.33). This near-parity of the two budgets is consistent with the group-size
invariance of the deterministic dynamics (Proposition~\ref{prop:group}
and Remark~\ref{rem:corr}): quadrupling the rollout count leaves the
mean transfer performance largely unchanged, matching the prediction that small
groups track large groups up to noise. The few regressions are not uniformly
distributed but concentrate on NuminaMath, where several configurations fall
slightly below base (e.g.\ DeepSeek-7B pass@1 from 16.97 to 15.96 at $G=16$),
indicating that transfer is overwhelmingly but not universally beneficial and
can mildly degrade on particular target distributions.

\begin{table}[h]
\centering
\caption{Out-of-distribution evaluation of GRPO models trained on GSM8K dataset across math benchmarks using pass@k (\%). Results are reported as mean $\pm$ standard deviation over random seeds. For MetaMathQA and OpenMath2 datasets, we randomly sample 2000 examples for evaluation. For NuminaMath, we skip one proof problem that is not objectively verifiable.}
\label{tab:gsm8k_benchmark}
\scriptsize
\setlength{\tabcolsep}{3pt}
\resizebox{\textwidth}{!}{%
\begin{tabular}{llccccccccc}
\toprule
& &
\multicolumn{3}{c}{\textbf{DeepSeek-7B}}
& \multicolumn{3}{c}{\textbf{DeepSeek-R1-Qwen-1.5B}}
& \multicolumn{3}{c}{\textbf{Nemotron-4B}} \\
\cmidrule(lr){3-5}
\cmidrule(lr){6-8}
\cmidrule(lr){9-11}
\textbf{Benchmark} & \textbf{Metric}
& \textbf{Base} & \textbf{G=16} & \textbf{G=4}
& \textbf{Base} & \textbf{G=16} & \textbf{G=4}
& \textbf{Base} & \textbf{G=16} & \textbf{G=4} \\
\midrule

\multirow{2}{*}{GSM-Plus (2400)}
& pass@1
& 25.64 & {\color{ForestGreen}30.83{\scriptsize$\pm$0.71}} & {\color{ForestGreen}40.25{\scriptsize$\pm$0.08}}
& 33.48 & {\color{ForestGreen}36.52{\scriptsize$\pm$1.32}} & {\color{ForestGreen}34.80{\scriptsize$\pm$1.50}}
& 9.11 & {\color{ForestGreen}\textbf{20.32{\scriptsize$\pm$5.55}}} & {\color{ForestGreen}19.46{\scriptsize$\pm$2.92}} \\
& pass@5
& 49.25 & {\color{ForestGreen}52.19{\scriptsize$\pm$0.91}} & {\color{ForestGreen}60.50{\scriptsize$\pm$0.41}}
& 55.54 & {\color{ForestGreen}57.86{\scriptsize$\pm$0.67}} & {\color{ForestGreen}56.44{\scriptsize$\pm$1.15}}
& 21.92 & {\color{ForestGreen}40.86{\scriptsize$\pm$8.52}} & {\color{darkgreen}\textbf{42.02{\scriptsize$\pm$3.86}}} \\

\midrule
\multirow{2}{*}{Math-500 (500)}
& pass@1
& 15.00 & {\color{ForestGreen}16.12{\scriptsize$\pm$0.11}} & {\color{ForestGreen}15.48{\scriptsize$\pm$1.07}}
& 70.80 & {\color{ForestGreen}75.74{\scriptsize$\pm$0.59}} & {\color{ForestGreen}73.18{\scriptsize$\pm$0.08}}
& 11.64 & {\color{darkgreen}\textbf{13.34{\scriptsize$\pm$0.31}}} & {\color{ForestGreen}12.62{\scriptsize$\pm$0.93}} \\
& pass@5
& 32.40 & {\color{ForestGreen}33.50{\scriptsize$\pm$0.14}} & {\color{ForestGreen}33.30{\scriptsize$\pm$3.54}}
& 85.20 & {\color{darkgreen}\textbf{89.60{\scriptsize$\pm$0.85}}} & {\color{ForestGreen}88.70{\scriptsize$\pm$1.84}}
& 31.40 & {\color{ForestGreen}31.50{\scriptsize$\pm$0.42}} & {\color{ForestGreen}31.60{\scriptsize$\pm$1.98}} \\

\midrule
\multirow{2}{*}{AIME-2026 (30)}
& pass@1
& 0.00 & {0.00{\scriptsize$\pm$0.00}} & {\color{ForestGreen}0.34{\scriptsize$\pm$0.47}}
& 1.33 & {\color{darkgreen}\textbf{6.17{\scriptsize$\pm$0.23}}} & {\color{ForestGreen}4.50{\scriptsize$\pm$0.24}}
& 0.00 & {0.00{\scriptsize$\pm$0.00}} & {0.00{\scriptsize$\pm$0.00}} \\
& pass@5
& 0.00 & {0.00{\scriptsize$\pm$0.00}} & {\color{ForestGreen}1.67{\scriptsize$\pm$2.35}}
& 5.19 & {\color{darkgreen}\textbf{11.78{\scriptsize$\pm$0.66}}} & {\color{ForestGreen}9.82{\scriptsize$\pm$0.14}}
& 0.00 & {0.00{\scriptsize$\pm$0.00}} & {0.00{\scriptsize$\pm$0.00}} \\

\midrule
\multirow{2}{*}{MetaMathQA (2000)}
& pass@1
& 66.93 & {\color{ForestGreen}73.07{\scriptsize$\pm$0.42}} & {\color{ForestGreen}70.30{\scriptsize$\pm$0.91}}
& 82.71 & {\color{ForestGreen}86.93{\scriptsize$\pm$1.10}} & {\color{ForestGreen}85.01{\scriptsize$\pm$0.79}}
& 46.54 & {\color{darkgreen}\textbf{52.55{\scriptsize$\pm$0.01}}} & {\color{ForestGreen}49.73{\scriptsize$\pm$0.38}} \\
& pass@5
& 90.65 & {\color{ForestGreen}91.78{\scriptsize$\pm$0.46}} & {\color{ForestGreen}90.88{\scriptsize$\pm$0.81}}
& 95.95 & {\color{ForestGreen}96.53{\scriptsize$\pm$0.04}} & {\color{ForestGreen}96.45{\scriptsize$\pm$0.21}}
& 74.05 & {\color{darkgreen}\textbf{75.43{\scriptsize$\pm$0.04}}} & {\color{ForestGreen}75.15{\scriptsize$\pm$0.64}} \\

\midrule
\multirow{2}{*}{NuminaMath (99)}
& pass@1
& 16.97 & {\color{red}15.96{\scriptsize$\pm$0.86}} & {\color{ForestGreen}17.57{\scriptsize$\pm$1.14}}
& 42.42 & {\color{ForestGreen}45.96{\scriptsize$\pm$0.14}} & {\color{ForestGreen}44.95{\scriptsize$\pm$0.71}}
& 14.55 & {\color{darkgreen}\textbf{17.37{\scriptsize$\pm$2.86}}} & {\color{red}14.55{\scriptsize$\pm$2.28}} \\
& pass@5
& 30.30 & {\color{red}28.79{\scriptsize$\pm$0.71}} & {\color{darkgreen}\textbf{35.35{\scriptsize$\pm$0.00}}}
& 61.62 & {\color{red}60.61{\scriptsize$\pm$1.43}} & {\color{red}60.11{\scriptsize$\pm$0.71}}
& 33.33 & {\color{red}32.32{\scriptsize$\pm$2.86}} & {\color{red}30.81{\scriptsize$\pm$7.85}} \\

\midrule
\multirow{2}{*}{OpenMath2 (2000)}
& pass@1
& 26.88 & {\color{ForestGreen}29.26{\scriptsize$\pm$0.68}} & {\color{ForestGreen}27.84{\scriptsize$\pm$0.29}}
& 54.42 & {\color{ForestGreen}62.15{\scriptsize$\pm$0.09}} & {\color{ForestGreen}59.05{\scriptsize$\pm$1.12}}
& 20.91 & {\color{darkgreen}\textbf{24.52{\scriptsize$\pm$0.19}}} & {\color{ForestGreen}22.51{\scriptsize$\pm$0.59}} \\
& pass@5
& 49.15 & {\color{ForestGreen}49.65{\scriptsize$\pm$0.85}} & {\color{red}48.35{\scriptsize$\pm$0.78}}
& 75.95 & {\color{ForestGreen}79.58{\scriptsize$\pm$0.81}} & {\color{ForestGreen}77.88{\scriptsize$\pm$0.04}}
& 42.00 & {\color{darkgreen}\textbf{44.35{\scriptsize$\pm$0.28}}} & {\color{ForestGreen}42.73{\scriptsize$\pm$0.39}} \\

\midrule
\multirow{2}{*}{IMO-Bench (400)}
& pass@1
& 2.00 & {\color{ForestGreen}2.23{\scriptsize$\pm$0.32}} & {\color{ForestGreen}2.20{\scriptsize$\pm$0.07}}
& 3.75 & {\color{darkgreen}\textbf{4.88{\scriptsize$\pm$0.81}}} & {\color{ForestGreen}4.67{\scriptsize$\pm$0.46}}
& 2.35 & {\color{red}2.17{\scriptsize$\pm$0.11}} & {\color{darkgreen}\textbf{2.60{\scriptsize$\pm$0.64}}} \\
& pass@5
& 7.50 & {\color{ForestGreen}7.88{\scriptsize$\pm$1.24}} & {\color{darkgreen}\textbf{8.25{\scriptsize$\pm$0.71}}}
& 10.75 & {\color{darkgreen}\textbf{14.38{\scriptsize$\pm$2.65}}} & {\color{ForestGreen}13.38{\scriptsize$\pm$1.24}}
& 9.00 & {9.00{\scriptsize$\pm$0.35}} & {\color{red}8.88{\scriptsize$\pm$1.24}} \\

\midrule
\multirow{2}{*}{HMMT-2025 (30)}
& pass@1
& 0.00 & {\color{ForestGreen}0.34{\scriptsize$\pm$0.47}} & {0.00{\scriptsize$\pm$0.00}}
& 6.00 & {\color{ForestGreen}6.33{\scriptsize$\pm$0.47}} & {\color{darkgreen}\textbf{7.33{\scriptsize$\pm$0.94}}}
& 0.00 & {0.00{\scriptsize$\pm$0.00}} & {0.00{\scriptsize$\pm$0.00}} \\
& pass@5
& 0.00 & {\color{darkgreen}\textbf{1.67{\scriptsize$\pm$2.35}}} & {0.00{\scriptsize$\pm$0.00}}
& 10.00 & {\color{darkgreen}\textbf{16.67{\scriptsize$\pm$0.00}}} & {\color{ForestGreen}15.00{\scriptsize$\pm$2.36}}
& 0.00 & {0.00{\scriptsize$\pm$0.00}} & {0.00{\scriptsize$\pm$0.00}} \\

\bottomrule
\end{tabular}%
}
\end{table}

\section{Conclusion and Future Work}
\label{sec:conclusion}

We have given a reduced-order model of GRPO training that recasts the fitting
of reward curves as a mechanistic account of their shape, conditional on a
mean-field reduction. Under a single mean-field assumption
(Assumption~\ref{ass:mf}), and a controlled second-order truncation in the step
size and refresh lag, the GRPO update reduces to a stochastically-forced damped
oscillator (Theorem~\ref{thm:main}) whose mass, damping, and stiffness are fixed
in closed form by the optimizer hyperparameters together with a single measured
curvature scale. The reduction is not
merely descriptive: it subsumes the empirical single-exponential saturation law
as its overdamped limit (Corollary~\ref{cor:sat}), reinterprets the fitted plateau,
timescale, and size exponent as the fixed point, inverse stiffness, and
curvature-scaling exponent of the underlying potential (Equation~\eqref{eq:curv}),
and supplies, through the retained inertial term, the slow-start inflection that
the single exponential cannot represent (Proposition~\ref{prop:threephase}).
Empirically, the three-phase law fits the training-reward trajectories of three
models at two group sizes to $R^2\ge0.91$, and the predicted group-size invariance
of the deterministic dynamics (Proposition~\ref{prop:group}) is borne out on both
the mean trajectory and out-of-distribution transfer to eight benchmarks. Beyond
the closed form, the dynamical reading yields a set of diagnostics that separate
failure modes a reward curve alone conflates---train--eval decoupling, advantage
degeneracy, policy concentration, and dynamical instability---and thereby
localize, within a single run, where the closed-form description can and cannot
be trusted.

The principal limitation is now one of parameterization rather than of regime.
The two predictions that most sharply distinguish a second-order model from the
first-order saturation laws it generalizes---the stability threshold
$K\eta K_{\mathrm{ref}}>1-\mu$ of Proposition~\ref{prop:stab} and the
overdamped-to-oscillatory transition under refresh-interval sweeps
(Section~\ref{sec:saturation})---are not reachable by our GSM8K runs, because the
small learning rate of our configuration places every one deep in the overdamped
regime (Proposition~\ref{prop:zeta}). We therefore exercise them in the setting
where Assumption~\ref{ass:mf} holds exactly, a softmax-bandit reduction
(Section~\ref{sec:oscillatory}): sweeping $K_{\mathrm{ref}}$ drives the exact
dynamics across the predicted boundary, reproducing the monotone-to-oscillatory
cascade and placing the empirical frontier on the threshold line $K\eta
K_{\mathrm{ref}}=1-\mu$ drawn with an independently measured $K$ and no free
parameter. That study also sharpens the theory's reach: the $O(\tau^2)$ delay
truncation is conservative about the $\zeta=1$ onset, and the linear divergence of
Proposition~\ref{prop:stab} is realized in the bounded nonlinear system as
persistent ringing rather than blow-up. What remains is the deep-network
demonstration---repeating the sweep, or correspondingly raising $\eta$, on the
models of Section~\ref{subsec:gpro_training_dynamics}, where
Assumption~\ref{ass:mf} is only approximate and the exact-reduction boundary
established here is the reference against which to measure the deviation.

Three further threads are opened by the empirical analysis. First, our
diagnostics record a residual group-size dependence of the within-group spread
$\sigma_{\mathrm{adv}}$ that the temperature interpretation of
Lemma~\ref{lem:noise} does not predict; the
deterministic trajectory remains $G$-invariant, but $\sigma_{\mathrm{adv}}$
itself carries structure beyond the $1/G$ forcing scaling, and a model of that
structure would extend the account past the leading-order temperature picture.
Second, the reduction is local: its coefficients are linearizations at the fixed
point $p^\star$ (Remark~\ref{rem:scope}), so it describes the approach to
saturation rather than the global trajectory, and a treatment of the pre-linear
phase---where mode concentration and advantage degeneracy first appear---would
close the gap between the dynamics and the failure modes the diagnostics detect.
Third, the curvature-scaling exponent $K(M)\propto\eta^{-1}M^{-0.3}$ is inherited
from the empirical fits rather than derived; relating it to the
lazy/NTK-regime slowdown it resembles would convert the last fitted exponent in
the chain into a predicted one. Taken together, these directions point toward a
GRPO training theory that is predictive not only at the overdamped fixed point we
characterize here but across the regimes where current practice actually
operates.

\acks{The authors received no specific funding for this work. The authors
declare no competing interests.}

\newpage

\clearpage
\appendix

\section{Symbolic verification of the coefficients}
\label{app:symbolic}
 
The continuous-time coefficients of Lemma~\ref{lem:momentum} and the delay
corrections of Lemma~\ref{lem:lag} were verified by symbolic computation. A
second-order Taylor expansion of the heavy-ball map reproduces \eqref{eq:hb}, and
substituting the delayed restoring force into the resulting equation reproduces
\eqref{eq:effcoef} together with the sign condition \eqref{eq:threshold}.

\section{GSM8K Hyperparameters}
\label{app:gsm8k_hyperparams}

\begin{table}[H]
\centering
\caption{GSM8K training configuration for GRPO experiments.}
\label{tab:gsm8k_config}
\begin{tabular}{l l}
\toprule
\textbf{Category} & \textbf{Configuration} \\
\midrule

\multicolumn{2}{l}{\textit{Training}} \\
Learning Rate & $1 \times 10^{-5}$ \\
Epochs & 2 \\
Batch Size & 4 \\
Gradient Accumulation & 4 \\
Temperature & 0.8 \\
Max Prompt Length & 256 \\
Max Completion Length & 786 \\
Logging Steps & 1 \\
Eval Steps & 100 \\
KL Coefficient ($\beta$) & 0.005 \\
Clipping Coefficient ($\epsilon$) & 0.2 \\
\midrule

\multicolumn{2}{l}{\textit{LoRA}} \\
Use LoRA & True \\
Rank ($r$) & 16 \\
Alpha & 64 \\
Dropout & 0.05 \\
Target Modules & q, k, v, o, up, down, gate projections \\
Merge Adapters & True \\

\bottomrule
\end{tabular}
\end{table}

\vskip 0.2in
\bibliography{sample}

\end{document}